\documentclass{article} % For LaTeX2e
\usepackage{iclr2022_conference,times}

% Optional math commands from https://github.com/goodfeli/dlbook_notation.
%%%%% NEW MATH DEFINITIONS %%%%%

%auto-ignore 
\usepackage{amsmath,amsfonts,bm}

% Mark sections of captions for referring to divisions of figures

% Highlight a newly defined term

% Figure reference, lower-case.

% Figure reference, capital. For start of sentence

% Section reference, lower-case.

% Section reference, capital.

% Reference to two sections.

% Reference to three sections.

% Reference to an equation, lower-case.
\def\eqref#1{equation~\ref{#1}}
% Reference to an equation, upper case

% A raw reference to an equation---avoid using if possible

% Reference to a chapter, lower-case.

% Reference to an equation, upper case.

% Reference to a range of chapters

% Reference to an algorithm, lower-case.

% Reference to an algorithm, upper case.

% Reference to a part, lower case

% Reference to a part, upper case

\def\1{\bm{1}}

% Random variables

% rm is already a command, just don't name any random variables m

% Random vectors

% Elements of random vectors

% Random matrices

% Elements of random matrices

% Vectors

\def\vv{{\bm{v}}}

% Elements of vectors

% Matrix

% Tensor
\DeclareMathAlphabet{\mathsfit}{\encodingdefault}{\sfdefault}{m}{sl}
\SetMathAlphabet{\mathsfit}{bold}{\encodingdefault}{\sfdefault}{bx}{n}

% Graph

% Sets

% Don't use a set called E, because this would be the same as our symbol
% for expectation.

% Entries of a matrix

% entries of a tensor
% Same font as tensor, without \bm wrapper
\newcommand{\etens}[1]{\mathsfit{#1}}

\def\etK{{\etens{K}}}

% The true underlying data generating distribution

% The empirical distribution defined by the training set

% The model distribution

% Stochastic autoencoder distributions

 % Laplace distribution

% Wolfram Mathworld says $L^2$ is for function spaces and $\ell^2$ is for vectors
% But then they seem to use $L^2$ for vectors throughout the site, and so does
% wikipedia.

 % See usage in notation.tex. Chosen to match Daphne's book.

\DeclareMathOperator*{\argmin}{arg\,min}

\usepackage{hyperref}
\usepackage{url}
\usepackage{amssymb}
\usepackage{amsthm}
\usepackage{dsfont}
\usepackage{wrapfig}
\usepackage{upgreek}
\usepackage{graphicx}
\newcommand{\bb}{\boldsymbol{{\beta}}} %bold beta
\newcommand{\loss}{\mathcal{L}} %írott L
\newcommand{\ngd}{\tilde{\nabla}} %natural gradient descent operator
\newcommand{\ee}{\mathds{E}}
\newcommand{\s}{\mathbf{s}}
\newcommand{\xx}{\mathbf{x}}
\newcommand{\yy}{\mathbf{y}}
\newcommand{\ww}{\mathbf{w}}
\newcommand{\real}{\mathds{R}}
\newcommand{\one}{\mathds{1}}
\usepackage{subfiles}
\usepackage{enumitem}
\setlist{nolistsep}

\newtheorem{thm}{Theorem}
\newtheorem{lem}{Lemma}

\theoremstyle{remark}
\newtheorem*{remark}{Remark}
\newtheorem*{proofsketch}{Proof sketch}

\newtheorem*{conclusion}{Conclusion}
\newtheorem*{statement}{Statement}

\title{Depth Without the Magic: Inductive Bias of Natural Gradient Descent}

% Authors must not appear in the submitted version. They should be hidden
% as long as the \iclrfinalcopy macro remains commented out below.
% Non-anonymous submissions will be rejected without review.

\author{Anna Kerekes \thanks{Equal contributions} \\
Faculty of Mathematics\\
University of Cambridge, UK\\
\texttt{ak2229@cam.ac.uk} \\
\And
Anna M\'esz\'aros ${}^\ast$ \\
Faculty of Science\\
E\"otv\"os Lor\'and University, Hungary\\
\texttt{meszarosanna@student.elte.hu}
\And
Ferenc Husz\'ar\\
Computer Laboratory \\
University of Cambridge, UK \\
\texttt{fh277@cam.ac.uk}
}

\iclrfinalcopy % Uncomment for camera-ready version, but NOT for submission.
\begin{document}

\maketitle

\begin{abstract}
In gradient descent, changing how we parametrize the model can lead to drastically different optimization trajectories, giving rise to a surprising range of meaningful inductive biases: identifying sparse classifiers or reconstructing low-rank matrices without explicit regularization. This implicit regularization has been hypothesised to be a contributing factor to good generalization in deep learning. However, natural gradient descent is approximately invariant to reparameterization, it always follows the same trajectory and finds the same optimum. The question naturally arises: What happens if we eliminate the role of parameterization, which solution will be found, what new properties occur?  We characterize the behaviour of natural gradient flow in deep linear networks for separable classification under logistic loss and deep matrix factorization. Some of our findings extend to nonlinear neural networks with sufficient but finite over-parametrization. We demonstrate that there exist learning problems where natural gradient descent fails to generalize, while gradient descent with the right architecture performs well.
\end{abstract}

\section{Introduction}

\begin{wrapfigure}{r}{0.45\textwidth}
	\vspace{-18pt}
    \centering
    \includegraphics[width=0.45\textwidth]{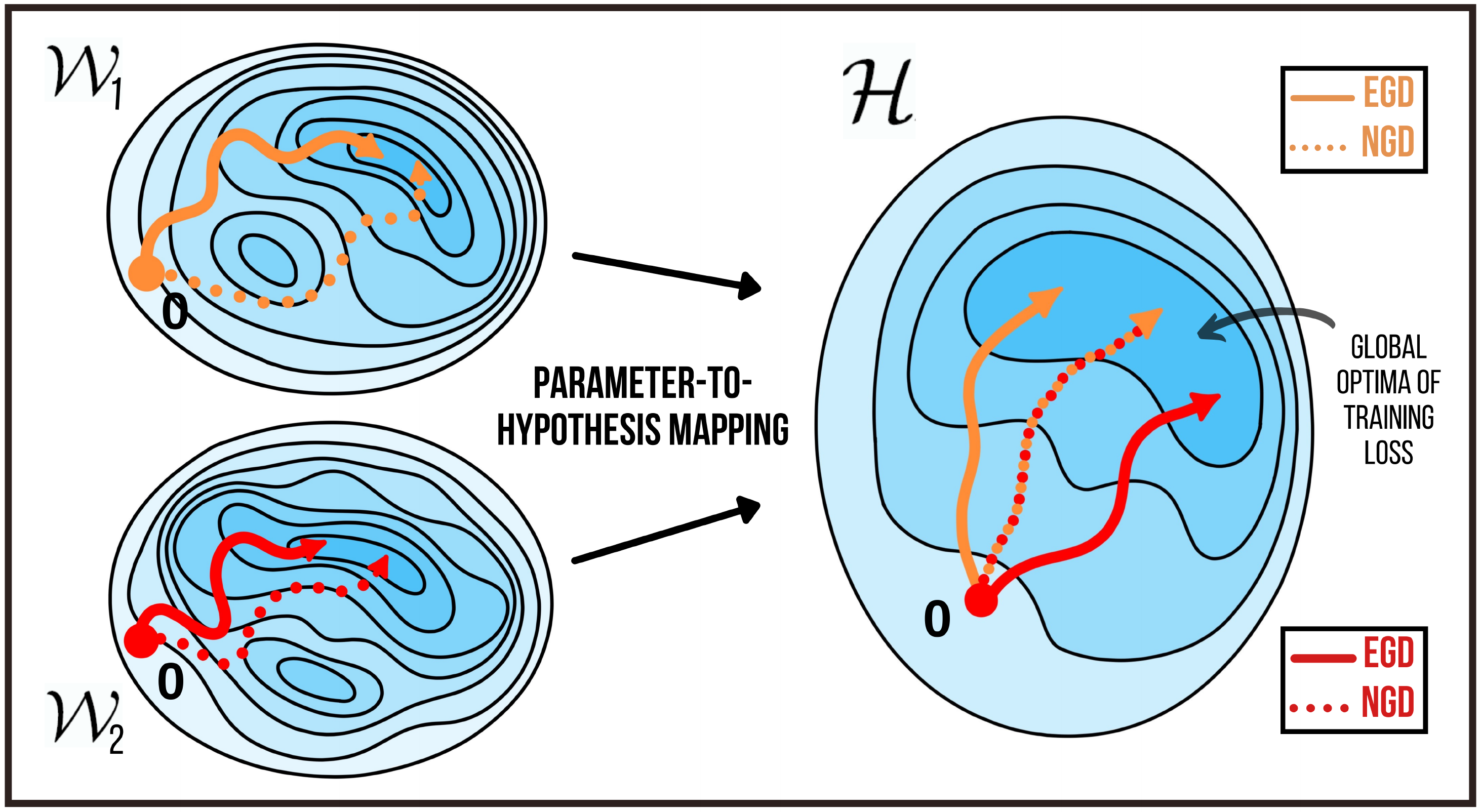}
    \caption{Illustration of parametrization-dependence of EGD and independence of NGD. Consider two parameter spaces ($\mathcal{W}_1$, $\mathcal{W}_2$) and two optimization trajectories in each: one EGD, one NGD. If we map these into the hypothesis space ($\mathcal{H}$) then EGD finds different optima, but NGD finds the same.}
	\label{fig:parameter-to-hypothesis-map}
	\vspace{-5mm}
\end{wrapfigure}
There is plenty of empirical evidence that the choice of network architecture is an important determinant of the success of deep learning \citep{He2015, Vaswani2017}. The empirical observations are now supported by theoretical work into the role that parameter-to-hypothesis mapping plays in determining inductive biases of gradient-based learning. Unregularized gradient descent can efficiently find low-rank solutions in matrix completion problems \citep{Arora2019}, sparse solutions in separable classification \citep{Gunasekar2018} or compressed sensing \citep{Vaskevivius2019}. \cite{VallePerez2019} studied deep neural networks and found evidence that the parameter-hypothesis mapping\footnote{The mapping between the parameter space and the set of hypotheses as seen on Figure \ref{fig:parameter-to-hypothesis-map} } is biased towards simpler functions as measured by Kolmogorov complexity. Taken together, these observations and findings have lead the community to hypothesize that
\begin{center}
\emph{The parameter-to-hypothesis mapping influences the inductive biases of gradient-based learning and may play an important role in generalization.}
\end{center}
In parallel to improving architectures, considerable research was done to improve optimization algorithms for deep learning, with a focus on faster convergence and robustness to hyperparameters. Among the most advanced optimization methods are natural gradient descent (NGD) techniques. An intuitive motivation for NGD is that it improves convergence by implicitly lifting the problem from parameter-space, where the loss is non-convex and poorly behaved to the Riemannian manifold of hypotheses, where the loss is better behaved. From the perspective of inductive biases, the most interesting aspect of NGD is its approximate invariance to reparametrization.
\vspace{-4pt}
\begin{center}
\emph{Natural gradient descent eliminates the effect of parameter-to-hypothesis mapping.}
\end{center}
\vspace{-4pt}
These two observations invite questions about the nature of inductive biases in NGD as well as the role of parametrization-dependence in generalization. The first, practical, implication is as follows: if the parameter-to-hypothesis mapping really does play an important role in generalization, then eliminating its influence on the optimization path may be undesirable, and consequently the pursuit of implementing exact NGD in deep architectures may be counterproductive. Secondly, studying the behaviour of NGD in various models and tasks may give us new insights about the importance of parametrization, and could perhaps offer a way to experimentally or theoretically test hypotheses.

In this paper we study the inductive bias of natural gradient descent in deep linear models. These models are particularly suited for our analysis because (a) efficient algorithms exist to calculate exact natural gradients which is otherwise computationally intractable and (b) the inductive biases of Euclidean gradient descent (EGD) in these models have been thoroughly studied and understood.

We make the following contributions:
\vspace{-4pt}
\begin{itemize}
    \item In linear classification, we show that NGF is invariant under invertible transformations of data (Theorems \ref{one}\&\ref{two}) and as a consequence it cannot recover the $\ell_p$ large margin solutions that EGD tends to converge to.
    \item We further show that (in case of separable classification) when the number of parameters exceeds the number of datapoints, NGF interpolates training labels in a way similar to ordinary least squares or ridgeless regression (Theorems \ref{three}\&\ref{four}).
    \item We demonstrate experimentally that there exist learning problems where NGD can not reach good generalization performance, while EGD with the right architecture can succeed.
    \item To perform experiments, we extended the work of \citet{Bernacchia2018} to derive efficient and numerically stable algorithms for calculating exact natural gradients in diagonal networks \citep{Gunasekar2018} and deep matrix factorization \citep{Arora2019}.
\end{itemize}
\vspace{-4pt}
Before stating our main theoretical and experimental results we review some relevant background on parametrization-dependent implicit regularization and natural gradients.

\section{Background}
\subsection{Separable Classification with Deep Linear Models\label{sec:background_logistic}}

In this article we consider binary classification datasets $\{(\xx_n, y_n), n=1,\ldots, N\}$ separable by a homogeneous linear classifier with a positive margin ( i.\,e.\ $\exists \bb^*$ s.t. $y_n\xx_n^{\top}\bb^* \geq 1\ \forall n$). (We use the notation $X = (\xx_1 \cdots \xx_N)^\top $). In such situation $\bb^*$ is not unique and there may be many separating hyperplanes which all achieve $0$ training loss - it is up to the inductive biases of the learning algorithm to select one. \citet{Soudry2017} studied the dynamics of unregularized Euclidean gradient descent on logistic loss and found that the iterate $\bb(t)$ converges to the well-known $\ell_2$ large margin classifier in direction, that is
\begin{center}
    $\lim_{t\to\infty}\frac{\bb(t)}{|\bb(t)|} = \frac{\bb^*_{\ell_2}}{|\bb^*_{\ell_2}|}$ where $\bb^*_{\ell_2} = \underset{\bb \in \real ^ D}\argmin ||\bb||_2$ s.t. $y_n\xx_n^{\top}\bb\geq 1 \quad \forall n$.
\end{center}
Importantly, \cite{Gunasekar2018} later showed that this behaviour changes if the gradient descent is performed on a different parametrization. In this paper we will focus on $L$-layer linear diagonal networks \citep{Gunasekar2018}, where $\bb = \mathbf{w_1}\odot \mathbf{w_2} \odot \ldots \odot \mathbf{w_L}$, using $\odot$ to denote elementwise product. When we adjust parameters $\mathbf{w_1}, \ldots, \mathbf{w_L}$ through Euclidean gradient descent, $\bb(t)$ converges to the $\ell_{\frac2L}$ large margin separator defined as
\begin{center}
    $\lim_{t\to\infty}\frac{\bb(t)}{|\bb(t)|} = \frac{\bb^*_{diag}}{|\bb^*_{diag}|}$ where $\bb^*_{diag} = \underset{\bb \in \real^D}\argmin ||\bb||_{\frac2L}$ s.t. $y_n\xx_n^{\top}\bb\geq 1 \quad \forall n$.
\end{center}
A remarkable consequence of this is that unregularized gradient descent can find sparse classifiers, without any form of explicit regularization. In fact, this inductive bias is even more sparsity-seeking than the typically used $\ell_1$ regularization \citep[see e.\,g.][]{Koh2007, Tibshirani1996}. Figure \ref{fig:2d_classification} illustrates this behaviour in a 2D example.

\begin{figure}[h]
\includegraphics[width=\columnwidth]{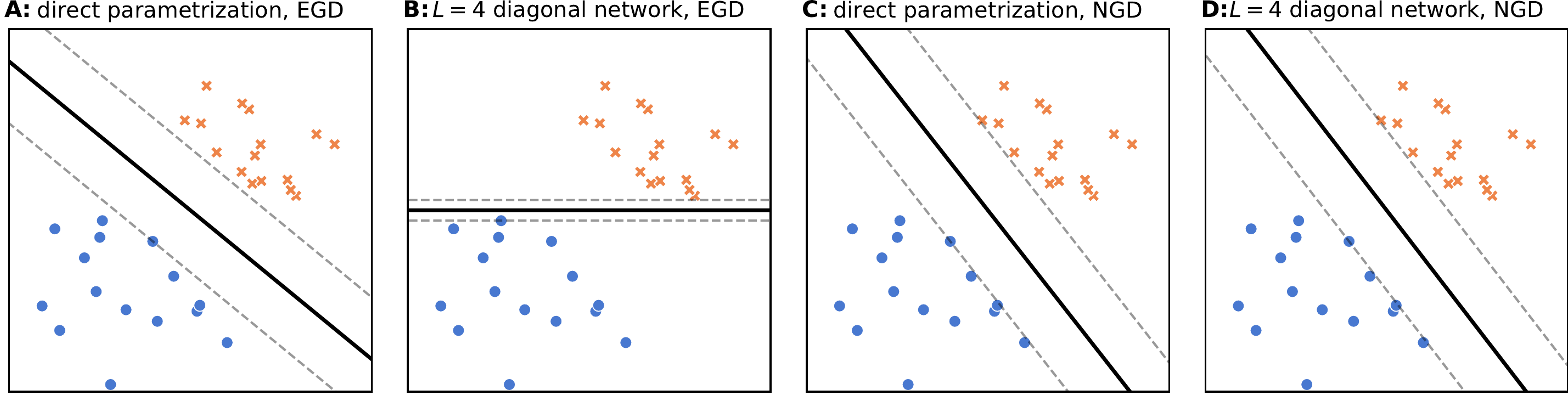}
\caption{\label{fig:2d_classification}Implicit regularization of EGD and NGD on logistic loss in separable classification.  EGD reaches different optima depending on parametrization: fully connected networks reach $\ell_2$ large margin\,(\textbf{A}), while  $L$-layer linear diagonal networks reach the $\ell_{\frac2L}$-large margin solution which favours sparsity\,(\textbf{B}), while  $L$-layer linear diagonal networks reach the $\ell_{\frac{2}{L}}$-large m. NGD converges to the same optimum irrespective of the parametrization\,(\textbf{C, D}).}
\end{figure}

\subsection{Matrix Completion via Deep Matrix Factorization\label{sec:background_deepmf}}

\begin{wrapfigure}{r}{0.48\textwidth} %42
	\vspace{-18pt}
    \centering
    \includegraphics[width=0.42\textwidth]{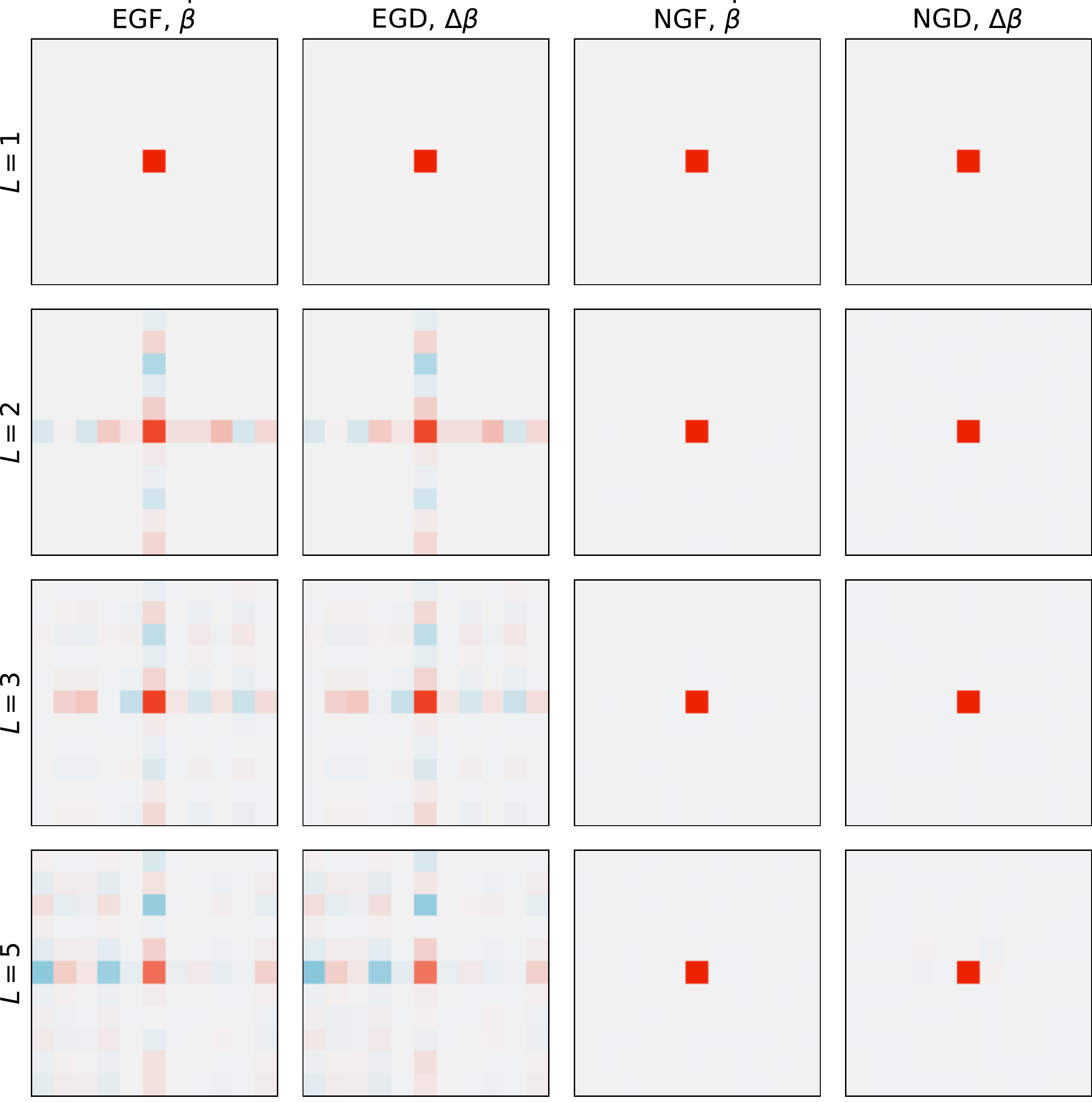}
    \caption{Illustration of the neural tangent kernel in EGF, EGD, NGF and NGD (\emph{left to right}) in matrix factorization models of different depth (\emph{top to bottom}). The algorithms take gradient steps to minimise the squared error on a single observation at the middle of the matrix. Each panel shows how entries of the full $11\times11$ matrix move from a random initial state. When $L\geq 2$, Euclidean gradient methods also move entries where there is no observation - enabling implicit regularization towards low-rank solutions. By contrast, and due to invariance, natural gradient methods move only the single entry to match the observation.}
	\label{fig:deepmf_kernels}
	\vspace{-8mm} %5mm
\end{wrapfigure}
The task of matrix completion involves recovering an unknown matrix $\bb^*\in\real^{D \times D}$ from a randomly chosen subset of observed entries\footnote{to simplify presentation we assume the matrices are square, but our arguments hold more generally.}. The problem is clearly underdefined: there are infinitely many matrices that match the observed entries. It is common to make additional assumptions about $\bb^*$, most commonly that that it has low rank, under which it becomes identifiable.

One approach to matrix completion under the low-rank assumption is based on explicit regularization (e.g. nuclear norm) which leads to a convex optimization problem. Another common approach is matrix factorization using an underparametrized representation $\bb=UV$ where the sizes of $U\in\real^{D\times R}$ and $V\in \real^{R\times D}$ are restricted to ensure $\bb$'s rank is at most $R$. Learning then proceeds by minimizing the non-convex mean-squared reconstruction error in $U,V$ via gradient descent.

Remarkably, \cite{Gunasekar2017} showed that the gradient-based matrix factorization method tends to converge to low-rank solutions even in the overparametrized setting, i.e. when $\bb=W_1W_2$ where $W_1$ and $W_2$ are full square matrices, without any explicit regularization. This was later extended by \cite{Arora2019}, who studied the deep matrix product parametrization of the form $\bb=W_1W_2\cdots W_L$. \cite{Arora2019} ran experiments for different matrix completion tasks varying initialization, depth and number of observations and compared them to minimum nuclear norm solution. When the number of observed entries is large gradient descent in deep matrix factorization models tended to the minimum nuclear norm solution. However, in the interesting case of fewer observed entries, the behaviour was different. Gradient descent preferred solutions with lower effective rank at the expense of higher nuclear norm. From the evolution of the singular values of $\bb$ they also concluded that the implicit regularization is towards low rank that becomes stronger as depth grows.

\subsection{Natural gradient descent}
In the next section we briefly introduce some notation and key properties of natural gradient descent \citep[NGD,][]{Amari1997, Pascanu2013}. Intuitively, one can think of NGD as a gradient descent method, but not in the Euclidean space (with the Euclidean metric) of parameters, but instead on the Riemannian manifold of probabilistic models the parameters define (equipped with a different metric). More specifically, let's say that the parameter of interest is $\theta$, where $\theta$ defines a probabilistic model $p(y|\xx,\theta)$. We assume that we wish to minimize the log loss under this model, i.\,e.\ $l(\theta,\xx,y)=-\log p(y|\xx,\theta) $ and $\loss(\theta) = \sum_{n=1}^{N} l(\theta,\xx_n,y_n)$. Then, NGD is usually defined as
\begin{align}
    \theta(t+1) &= \theta(t)-\eta F^{-1}(\theta)\nabla_{\theta} \loss (\theta)\text{, where}\\
    F(\theta) &= \ee_X [\ee_{Y|X;\theta} [\nabla_\theta \loss(\theta) \nabla^{\top}_\theta \loss(\theta)] ]
\end{align}
is the average Fisher information matrix and $\eta$ is the step size. In the above definition, $\ee_{Y|X;\theta}$ is taken over the distribution specified by $\theta$, but distribution with respect to which the expectation $\ee_X$ is calculated can be arbitrarily chosen. In this article we use the empirical distribution of training data, though other choices are possible \citep{Pascanu2013}. We will also consider natural gradient flow (NGF) the continuous limit of NGD, analogously defined as
~
\begin{equation}
    \dot \theta = -F^{-1}(\theta) \nabla_\theta \loss(\theta).
\end{equation}
We also note, that $F(\theta)$ is not generally invertible, and indeed it will not be in some of the cases we will consider. Therefore, it is more correct to define NGF as any trajectory $\theta_t$ which satisfies
\begin{equation}
    F(\theta) \dot \theta = - \nabla_\theta \loss(\theta).
\end{equation}
The natural gradient direction is thus only unique within the eigenspace of $F(\theta)$. Of all natural gradient directions, one common choice is to use the Moore-Penrose pseudoinverse of $F$:
\begin{equation}
\label{eqn:moorepenrose_ngd}
    \dot \theta = -F^{+}(\theta) \nabla_\theta \loss(\theta).
\end{equation}
We have seen how in EGD, different parametrization of the same problem leads to drastically different trajectories and optima. However, NGD with infinitesimally small learning rate (i.\,e.\ NGF) always follows the same trajectory in model-space and this finds the same optimum, irrespective of how it is parametrized, provided that the parametrization is smooth and locally invertible. Below we formally state this property \cite{Amari1997}, alongside a short proof in the Appendix for illustration. 

\begin{statement}[Invariance of NGF under reparametrization]
Let $\ww$ and $\theta$ be two parameter vectors related by the mapping $\theta = \mathcal{P}(\ww)$ and consider natural gradient flow in $\mathbf{w}$. Assume that (1) the Jacobian $J = \frac{\partial \theta_t}{\partial \mathbf{w}_t}$ and (2) $F(\theta_t)$ are both full rank for all $t$. If $\mathbf{w}_t$ follows natural gradient flow starting from $\mathbf{w}_0$ then $\theta_t = \mathcal{P}(\mathbf{w}_t)$ follows NGF, i.\,e.\ it solves $\dot \theta_t = -F(\theta_t)^+ \nabla_{\theta_t} \loss (X, \theta_t)$.
\end{statement}
\vspace{-11pt}
\section{Natural gradients under Logistic Loss on Separable Data}

We have seen in Section \ref{sec:background_logistic} that when trained on separable data with the logistic loss EGD tends to converge to large margin classifiers. To illustrate how NGD differs, we first prove an invariance property which, as we will see, rules out large margin behaviour. We state this property separately when $N<D$ and when $N \geq D$ in the theorems that follow. We denote the number of data points with $N$ and the number of input features with $D$.
\begin{thm}
\label{one}
    Let's assume, that $N<D$, $X$ is full rank and $A$ is an invertible $D\times D$ matrix. Let $\bb_t=\bb_t(X,\yy)$ be the trajectory of NGF and $\bb'_t = \bb_t(XA^\top,\yy)$ (the trajectory of NGF on data $XA^\top$). Then $X\bb=XA^\top\bb'$ (with the assumption that $\bb$ and $\bb '$ have equivalent initial conditions).
\end{thm}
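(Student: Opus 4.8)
The plan is to collapse the high-dimensional parameter dynamics onto the $N$-dimensional vector of logits $\vz_t = X\bb_t$, show that these logit dynamics obey an ODE that depends neither on $X$ nor on $A$, and conclude by uniqueness of solutions. The starting observation is that for logistic loss $p(y\mid\xx,\bb)$ depends on $\bb$ only through the inner product $\xx^\top\bb$, so both ingredients of NGF factor through the logits. Explicitly, one finds $F(\bb) = X^\top D X$ and $\nabla_\bb\loss(\bb) = X^\top\vg$, where $D = \mathrm{diag}\big(\sigma'(z_1),\dots,\sigma'(z_N)\big)$ has strictly positive diagonal entries and $\vg = \vg(\vz)$ collects the per-example residuals $g_n = -y_n\sigma(-y_n z_n)$. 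The essential point is that $D$ and $\vg$ are functions of $\vz = X\bb$ alone.

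The crux of the argument is to extract a closed equation for $\vz_t$. Substituting the two factorizations into the defining relation $F(\bb)\dot\bb = -\nabla_\bb\loss(\bb)$ yields $X^\top\big(DX\dot\bb + \vg\big) = 0$. Since $N < D$ and $X$ has full row rank, the map $X^\top$ is injective, so the bracketed vector must vanish: $DX\dot\bb = -\vg$, hence $\dot\vz = X\dot\bb = -D(\vz)^{-1}\vg(\vz)$. This also settles the non-uniqueness of the natural-gradient direction: because the loss gradient lies in the range of $F$ a valid direction $\dot\bb$ exists, and although $F$ is singular so that $\dot\bb$ is pinned down only modulo $\ker X$, the product $X\dot\bb$ is unambiguous, so every admissible NGF direction (including the Moore--Penrose choice) induces the same logit flow. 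The resulting ODE has a smooth right-hand side, as the denominators $\sigma'(z_n)$ are strictly positive, and crucially it references only $\vz$ and the labels $\yy$: the matrix $X$ has dropped out completely.

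Finally I would rerun the identical computation for the transformed problem. The data $XA^\top$ still has full row rank because $A$ is invertible, its logits are $XA^\top\bb'$, and the same injectivity argument gives $\tfrac{d}{dt}\big(XA^\top\bb'_t\big) = -D^{-1}\vg$ with the very same right-hand side. Thus $\vz_t = X\bb_t$ and $\vz'_t = XA^\top\bb'_t$ solve one and the same initial value problem; reading ``equivalent initial conditions'' as $\vz_0 = \vz'_0$, Picard--Lindel\"of uniqueness forces $\vz_t = \vz'_t$, i.e. $X\bb_t = XA^\top\bb'_t$, for all $t$. The step to watch most carefully is the singularity of $F$: one must confirm that the logit flow is genuinely well defined even though $F$ is not invertible, and this is exactly what the injectivity of $X^\top$ (equivalently, the full-row-rank hypothesis on $X$) secures.
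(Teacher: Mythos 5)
Your proof is correct and follows essentially the same route as the paper's: both collapse the dynamics onto the logits $\s = X\bb$ and observe that the resulting ODE, $\dot s_n = y_n(1-\phi(y_n s_n))/\bigl(\phi(s_n)(1-\phi(s_n))\bigr)$, involves neither $X$ nor $A$, so the two trajectories coincide by uniqueness of solutions. If anything, your derivation of the logit flow directly from $F(\bb)\dot\bb = -\nabla_{\bb}\loss$ via injectivity of $X^\top$ is more careful than the paper's, which reaches the same equation by citing Lemma \ref{lem1} and implicitly invoking reparametrization invariance without addressing the singularity of $F(\bb)$ or the attendant non-uniqueness of $\dot\bb$.
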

\begin{proofsketch}
     We use the notation $\s=X\bb$ and $\s' = XA^\top \bb$ and prove that $\s_t = \s'_t$. The full proof can be found in Appendix \ref{sec:appdx_prf_thm1}.
\end{proofsketch}
\begin{thm}
\label{two}
Let $\bb_t(X,\yy)$ be the trajectory of NGF and let $A$ be a $D \times D$ invertible transformation. If $N \geq D$, $X$ has full rank and we consider NGF on the transformed data $XA^\top$, then $A^\top \bb_t(XA^\top,\yy)=\bb_t(X,\yy)$ (with the assumption that $\bb$ and $\bb '$ have equivalent initial conditions).
\end{thm}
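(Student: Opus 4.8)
The plan is to exploit the fact that, under the logistic model, both the Fisher matrix and the loss gradient depend on the parameter only through the logits $\s = X\bb$, and then to exhibit an explicit reparametrized trajectory that provably solves the transformed flow. Because $N\geq D$, this regime is genuinely simpler than Theorem \ref{one}: the Fisher matrix is actually invertible, so I can work with $F^{-1}$ directly and track $\bb$ itself rather than the logit vector.

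First I would record the structural form of the two ingredients of NGF. For the logistic model the per-example Fisher information depends only on the logit $s_n = \xx_n^\top\bb$, so the full Fisher matrix factorizes as $F(\bb) = X^\top \Lambda(\s) X$, where $\Lambda(\s)$ is the diagonal matrix with entries $\sigma(s_n)\bigl(1-\sigma(s_n)\bigr)$; likewise the loss gradient is $\nabla_\bb\loss = X^\top \vg(\s)$, where $\vg(\s)$ collects the per-logit derivatives of the loss and again depends on $\bb$ only through $\s$. Since $X$ has full column rank and $\Lambda(\s)\succ 0$ at every finite time (the entries $\sigma(s_n)(1-\sigma(s_n))$ are strictly positive whenever the logits are finite, which holds for all finite $t$), the matrix $F(\bb)=X^\top\Lambda X$ is invertible. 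This is exactly the feature of the $N\geq D$ case that lets me use $F^{-1}$ instead of the pseudoinverse.

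Next I would propose the ansatz $\bb'_t := A^{-\top}\bb_t$ for the transformed trajectory and verify that it solves NGF on the data $XA^\top$. The key observation is that this map preserves logits: with transformed rows $\tilde\xx_n = A\xx_n$ one has $\tilde\xx_n^\top\bb'_t = \xx_n^\top A^\top A^{-\top}\bb_t = \xx_n^\top\bb_t = s_n$, so $\Lambda$ and $\vg$ are identical to those of the untransformed problem. Consequently the Fisher and gradient transform covariantly, $\tilde F(\bb'_t) = (XA^\top)^\top\Lambda\,(XA^\top) = A\,F(\bb_t)\,A^\top$ and $\tilde\nabla = (XA^\top)^\top\vg = A\,\nabla_\bb\loss$. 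Substituting into the transformed flow gives $-\tilde F^{-1}\tilde\nabla = -(AFA^\top)^{-1}A\nabla = -A^{-\top}F^{-1}\nabla$, which is precisely $\frac{d}{dt}\bigl(A^{-\top}\bb_t\bigr) = A^{-\top}\dot\bb_t$. Hence the ansatz satisfies the transformed NGF ODE.

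Finally I would close by uniqueness. Because the initial conditions are assumed to match under the same map ($\bb'_0 = A^{-\top}\bb_0$) and the right-hand side of the flow is smooth with $\tilde F$ invertible along the trajectory, the solution of the transformed flow is unique, so $\bb_t(XA^\top,\yy) = A^{-\top}\bb_t(X,\yy)$, i.e. $A^\top\bb_t(XA^\top,\yy) = \bb_t(X,\yy)$ as claimed. The step I expect to require the most care is justifying invertibility of $F$ throughout (finiteness of the logits at every finite time, so that $\Lambda\succ 0$) together with the covariance law $\tilde F = AFA^\top$; once these are in hand the verification is a one-line substitution. I would also note that this theorem can be read as an instance of the general reparametrization-invariance Statement with the constant full-rank Jacobian $J=A^\top$, but the direct computation makes the invertible-Fisher regime and the role of the $N\geq D$ assumption explicit.
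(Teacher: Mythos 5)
Your proposal is correct and follows essentially the same route as the paper's proof: both rest on the covariance relations $F(\bb') = A F(\vv) A^\top$ and $\nabla_{\bb'}\loss = A\nabla_{\vv}\loss$ (a consequence of the logits being preserved under the change of variables), combined with invertibility of the Fisher matrix in the $N \geq D$ full-rank regime. The only cosmetic difference is direction: the paper starts from the transformed trajectory and shows $\vv = A^\top\bb'$ satisfies the original ODE, whereas you verify that $A^{-\top}\bb_t$ satisfies the transformed ODE and close by uniqueness --- the same computation read backwards, with your explicit appeal to ODE uniqueness being a slightly more careful finish than the paper's.
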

\begin{remark}
   When $N \geq D$ and X is full rank, the size of $F(\bb)$ is $D \times D$ and its rank is D, therefore the Fisher information matrix of $\bb$ is invertible.  
\end{remark}
\begin{proofsketch}
     First let's say $\bb'_t=\bb_t(XA^\top, \yy)$ and $\vv^\top = \bb'^\top A$. Then we prove the following:
     \begin{align}
         \nabla_{\bb'} \loss(y_n\bb'^\top A\xx_n) &= A\nabla_{\vv} \loss(y_n\vv^\top\xx_n) \quad \text{and} \quad
         F(\bb') = AF(\vv)A^\top.
     \end{align}
     Hence we get:
     \begin{align}
         \dot\bb' = F(\bb')^{-1}\nabla_{\bb'}\loss(\bb') \quad \text{and} \quad \dot \vv = F(\vv)^{-1} \nabla_{\vv} \loss(\vv).
     \end{align}
     So if $\vv$ and $\bb'$ have the same initialization, then $\vv_t=\bb'_t$. Full proof can be found in Appendix \ref{sec:appdx_prf_thm2}.
\end{proofsketch}
\begin{conclusion}
Let $s_t (X,\yy)$ denote the trajectory of $X \bb_t$, which is the linear function $\bb_t^\top \xx$ evaluated at each of the datapoints $\xx_n$. Then $s_t(XA^\top, \yy) = s_t (X,\yy)$.
\end{conclusion}
\emph{Proof.} \quad \quad \quad \quad \quad  $s_t(XA^\top,\yy) = XA^\top \bb_t(XA^\top,\yy) = X\bb_t(X,\yy) = s_t(X,\yy)$
\par
One special case of this invariance property is invariance to scaling the dimensions of input data (when $A$ is diagonal). Imagine we scale any dimension by a constant $a$, NGF counteracts it by scaling the corresponding coordinate of $\beta$ by $a^{-1}$. We see now why this rules out characterising implicit regularization of NGD as minimizing non-data-dependent norms of $\bb$. In particular, it rules out the $\ell_p$ large-margin behaviour we have seen in EGD.
\begin{remark}
Let A be a $D \times D$ invertible transformation and let $\bb^*(X,\yy)$ be the $\ell_2$ large margin solution, \emph{i.e.} $\bb^*(X,\yy)=\operatorname{argmin}{\| \bb \|_2}$ subject to $y_n \bb^\top \xx_n \geq 1$ $\forall n$. Then the $\ell_2$ large margin classifier does not have the invariance property, namely there exists a dataset $(X,y)$ and a transformation A such that $A^\top \bb^*_t(XA^\top,\yy) \neq \bb^*_t(X,\yy)$. We include a proof by counterexample in Appendix \ref{sec:appdx_counterex}.
\end{remark}
Having ruled out norm-based implicit regularization, it's natural to consider other statistical methods that exhibit invariance under invertible data transformations. One candidate is ridge-less regression or ordinary least squares (OLS), whose parameter is given by the formula $\bb_\text{OLS} = (X^\top X)^{-1}X^\top y$. As it turns out, the connection between NGD in linear regression and the OLS estimate run deeper than sharing this invariance property.
\begin{thm}
    \label{three}
    If $N<D$, $X$ is full rank and parameters $\bb_t$ of a linear model follow natural gradient flow under logistic loss, the logits $\s_t=X\bb_t$ follow an asymptotically linear trajectory with direction vector $\yy$.
\end{thm}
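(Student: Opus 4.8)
My plan is to exploit the fact that the logistic loss depends on $\bb$ only through the logits $\s = X\bb$, which collapses the (singular) natural gradient flow on $\bb$ into a tractable, \emph{decoupled} flow on $\s$. First I would assemble the two ingredients of NGF. Writing the per-example loss as $\log(1+e^{-y_n s_n})$, the Euclidean gradient is $\nabla_\bb \loss = X^\top \vg$ with $g_n = -y_n\,\sigma(-y_n s_n)$, and the Fisher information is $F(\bb) = X^\top D X$, where $D = \mathrm{diag}(d_1,\dots,d_N)$ and $d_n = \sigma(s_n)\sigma(-s_n)$. A key observation is that the Fisher information carried by each logit, $d_n$, is label-independent, since the expectation defining $F$ averages over $y_n$ drawn from the model at $s_n$.

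Because $N<D$ and $X$ has full row rank, $F$ has rank $N<D$ and is singular, so I would use the Moore--Penrose form $\dot\bb = -F^{+}\nabla_\bb\loss$. The heart of the argument is the reduction to logit space:
\begin{equation}
\dot\s = X\dot\bb = -X(X^\top D X)^{+}X^\top \vg = -D^{-1}\vg .
\end{equation}
The final equality is a linear-algebra identity I would prove by setting $\tilde X = D^{1/2}X$ (legitimate since $D\succ 0$ whenever the logits are finite), so that $X^\top D X = \tilde X^\top\tilde X$, and using the SVD $\tilde X = U\Sigma V^\top$: as $\tilde X$ has full row rank, $\Sigma(\Sigma^\top\Sigma)^{+}\Sigma^\top = I_N$, whence $X(X^\top D X)^{+}X^\top = D^{-1/2}\,\tilde X(\tilde X^\top\tilde X)^{+}\tilde X^\top D^{-1/2} = D^{-1}$. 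Componentwise this gives the strikingly simple law $\dot s_n = -g_n/d_n = y_n/\sigma(y_n s_n)$.

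The last step reads off the asymptotics. The decisive feature of $\dot s_n = y_n/\sigma(y_n s_n)$ is that it decouples across examples: introducing the signed margin $u_n = y_n s_n$ yields the scalar autonomous ODE $\dot u_n = 1 + e^{-u_n}$. Since $\dot u_n > 1$, each $u_n$ diverges to $+\infty$; the ODE is separable and integrates to $\ln(1+e^{u_n(t)}) = t + C_n$, so that $u_n(t) = t + C_n + o(1)$. Translating back, $s_n(t) = y_n t + y_n C_n + o(1)$, i.e. $\s_t - \yy\,t$ converges to a constant vector and $\s_t/t \to \yy$ --- precisely the asymptotically linear trajectory with direction $\yy$ claimed.

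I expect the main obstacle to be the singular-Fisher reduction in the second paragraph: one must justify that the pseudoinverse flow stays in the range of $F$ (equivalently the row space of $X$), verify the identity $X(X^\top D X)^{+}X^\top = D^{-1}$ carefully, and confirm that $D$ remains strictly positive definite along the trajectory --- which holds at every finite time because the logits grow only linearly and hence stay finite. Once the flow on $\s$ is in hand, the decoupling into independent scalar ODEs makes the asymptotic-linearity conclusion essentially immediate, requiring little beyond a clean estimate of the $o(1)$ remainder.
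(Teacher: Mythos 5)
Your proof is correct, and its second half --- the decoupled scalar ODE $\dot u_n = 1+e^{-u_n}$ for the signed margins, integrated to $\ln(1+e^{u_n}) = t + C_n$ and hence $u_n(t) = t + C_n + o(1)$ --- is exactly the computation in the paper's Appendix. Where you differ is in how you justify the reduction from $\bb$-space to logit space. The paper obtains Theorem~\ref{three} as the special case $J = X$ of Theorem~\ref{four}, invoking the parametrization-invariance of NGF as a black box to assert $\dot\s = -F(\s)^{-1}\nabla_\s\loss(\s)$ with the diagonal Fisher of Lemma~\ref{lem1}; you instead work directly with the singular Fisher $F(\bb) = X^\top D X$ and prove the identity $X(X^\top D X)^{+}X^\top = D^{-1}$ via the SVD of $D^{1/2}X$. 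Your route is more elementary and self-contained for the linear case (it does not lean on the invariance statement, whose hypotheses one would otherwise have to check for a non-square, rank-$N$ Jacobian), at the cost of not generalizing to the nonlinear setting of Theorem~\ref{four}. One small upgrade worth making: you restrict to the Moore--Penrose flow, but the same conclusion holds for \emph{every} NGF path. From $X^\top D X\dot\bb = -X^\top\vg$ one gets $X^\top(DX\dot\bb + \vg) = 0$, and since $X^\top$ has full column rank $N$ it is injective, so $DX\dot\bb = -\vg$ and $\dot\s = -D^{-1}\vg$ regardless of which solution of the singular linear system is chosen --- this is precisely why the paper's remark says one should track $\s$ rather than $\bb$, and it also disposes of your worry about the flow staying in the range of $F$.
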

\begin{remark}
    The Fisher information matrix w.r.t. $\bb$ is $F(\bb)=X^\top D(\bb) X$, where $D(\bb)$ is diagonal with positive elements on the diagonal. We see, that $\text{rank}(F)=\text{rank}(X)\leq N$, so $F$ is singular, thus several NGF paths are possible. When $\text{rank}(X)=N$, $\bb$ has $D-N$ degrees of freedom and we did describe $\bb$ on $N$ dimensions. That's why we consider $\s$ instead of $\bb$.
\end{remark}
This Theorem follows from the more general Theorem \ref{four} which we will state later.

Informally, when we have more parameters than datapoints, NGD discovers a solution that interpolates the training labels $\yy$ (encoded as $-1$s and $+1$s) perfectly just like ordinary least squares does in this case. Furthermore, if one uses the Moore-Penrose pseudoinverse to calculate the descent direction, i.\,e.\ Eqn.\ (\ref{eqn:moorepenrose_ngd}), then $\bb_t$ converges in direction to the OLS parameter.

In general cases, OLS interpolation and large-margin (LM) methods find qualitatively different solutions in classification tasks. While the LM solution is typically a linear combination of a small subset of training data (the support vectors), in OLS all datapoints are \emph{support vectors}. As shown in \citep{Hsu2020}, under some conditions this difference disappears in the highly overparametrised regime - when $D>N\log N$. An implication of Theorem \ref{three} is that this phenomenon, known as support vector proliferation, occurs in NGF when $D>N$. Thus there is a regime where NGF and EGF find qualitatively different classifiers, with different generalisation properties \citep{Hsu2020}.

Theorem \ref{three} provided useful in the context of linear models but it turns out it is relatively straightforward to extend this to a result which holds for non-linear overparametrized models as well.
\begin{thm}
        \label{four}
        Let $\ww \in \real^P, P\geq N$ be the parameters of a classifier with logits $\s =s(X;\ww)\in \real^N$.  If $\ww_t$ follows natural gradient flow on the logistic loss with labels $\yy$ and the Jacobian $J_t= \frac{\partial \s_t}{\partial \ww_t}$ is of full rank, then $\s_t$ grows asymptotically linearly with direction vector $\yy$.
\end{thm}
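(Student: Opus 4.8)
The plan is to show that the logit trajectory $\s_t$ decouples entirely from the choice of over-parametrized map $\ww\mapsto s(X;\ww)$, collapsing onto a single parametrization-independent flow that I can then integrate coordinate-wise. The starting observation is that the logistic loss depends on $\ww$ only through the logits, so writing $\loss(\ww) = \ell(\s(\ww))$ we have $\nabla_\ww\loss = J_t^\top \nabla_\s\ell$, and, since the probabilistic model $p(y_n\mid\xx_n,\ww)$ also factors through $\s$, the Fisher information pulls back as $F(\ww_t) = J_t^\top D_t\,J_t$. Here $D_t = \mathrm{diag}\!\left(\sigma(s_{n,t})(1-\sigma(s_{n,t}))\right)$ is the diagonal matrix of per-datapoint Bernoulli Fisher informations, which is strictly positive definite since each $\sigma(s)(1-\sigma(s))>0$.

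First I would push the natural gradient flow forward to logit space. Differentiating $\s_t = s(X;\ww_t)$ gives $\dot\s_t = J_t\dot\ww_t$, and substituting $\dot\ww_t = -F(\ww_t)^{+}\nabla_\ww\loss = -\left(J_t^\top D_t J_t\right)^{+}J_t^\top\nabla_\s\ell$ yields
\[
\dot\s_t = -\,J_t\left(J_t^\top D_t J_t\right)^{+}J_t^\top\nabla_\s\ell.
\]
The crux of the argument is the linear-algebra identity $J\left(J^\top D J\right)^{+}J^\top = D^{-1}$, valid whenever $J$ has full row rank $N$ and $D\succ 0$. I would prove it by setting $K = D^{1/2}J$ (still full row rank, so $KK^\top$ is invertible), verifying directly that $\left(K^\top K\right)^{+} = K^\top(KK^\top)^{-2}K$ satisfies the four Moore--Penrose conditions, and then computing $J(J^\top D J)^{+}J^\top = D^{-1/2}K(K^\top K)^{+}K^\top D^{-1/2} = D^{-1/2}(KK^\top)(KK^\top)^{-2}(KK^\top)D^{-1/2} = D^{-1}$. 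This reduces the flow to $\dot\s_t = -D_t^{-1}\nabla_\s\ell$, which is exactly natural gradient flow \emph{in the logits themselves}; the over-parametrized architecture has dropped out entirely. This is the promised strengthening of Theorem \ref{three} to nonlinear models, and it recovers that result when $s(X;\ww)=X\ww$, so that $J=X$.

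With the flow reduced to logit space the remainder is an elementary ODE analysis. Using $\partial_{s_n}\ell = -y_n\sigma(-y_n s_n)$ and $\sigma(s_n)(1-\sigma(s_n)) = \sigma(y_n s_n)\sigma(-y_n s_n)$, the $n$-th coordinate becomes $\dot s_n = y_n/\sigma(y_n s_n)$. Substituting $u_n = y_n s_n$ (so $y_n^2=1$) gives the decoupled scalar equation $\dot u_n = 1 + e^{-u_n}$. Integrating, $\log(e^{u_n}+1) = t + c_n$, so $u_n(t) = t + \log A_n + o(1)$ as $t\to\infty$; hence each $u_n$ grows linearly with unit slope and $s_{n,t} = y_n u_n(t) = y_n t + O(1)$. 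Equivalently $\s_t = t\,\yy + O(1)$, which is the claimed asymptotically linear growth with direction vector $\yy$.

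I expect the main obstacle to be the reduction step rather than the ODE: because $F(\ww_t) = J_t^\top D_t J_t$ is singular (rank $N<P$), the reparametrization-invariance Statement cannot be invoked as a black box, and one must establish the decoupling through the explicit pseudoinverse identity above, taking care that full rank of $J_t$ persists along the trajectory (an assumption of the theorem) so that $D_t^{-1}$ and the identity remain valid for all $t$. The coordinate-wise integration is then routine.
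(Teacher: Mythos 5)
Your proposal is correct and follows essentially the same route as the paper's proof: reduce the flow to logit space, observe that $F(\s)$ is diagonal with entries $\phi(s_n)(1-\phi(s_n))$, and integrate the resulting decoupled scalar ODE $\dot u_n = 1/\phi(u_n) = 1+e^{-u_n}$ to get $u_n = \log(e^{t+c_n}-1) \sim t$. The only difference is in how the reduction is justified: the paper invokes its reparametrization-invariance Statement in one line, whereas you establish the identity $J\left(J^\top D J\right)^{+}J^\top = D^{-1}$ by direct verification of the Moore--Penrose conditions, which makes the pseudoinverse manipulation (left implicit in the paper's invariance proof) fully rigorous.
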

\begin{remark}
    If our network is linear $J=X$, so Theorem \ref{three} is a special case of Theorem \ref{four} indeed.
\end{remark}
\begin{proofsketch}
     The main idea is that, since $J$ is full rank, by parametrization invariance of NGD the trajectory of $\s$ is determined by the trajectory of the corresponding $\bb$.
    \begin{align}
        \label{diffs}
        \dot \s &= -F^{-1}(\s)\nabla_{\s}\loss(\s)
    \end{align}
    Then we can calculate $F(s)$ which turns out to be diagonal, so we have $N$ independent differential equations. We solve them to get the result. The details of the proof can be found in the Appendix \ref{sec:appdx_thm_prf}.
\end{proofsketch}

\subsection{Experiments}

In order to validate and illustrate our findings we have run two main simulations, with results presented in Figures \ref{fig:2d_classification} and \ref{fig:1000d_classification}. In both experiments we considered the direct parametrization $\bb = \ww$ and the diagonal parametrization $\bb = \ww_1\odot\cdots\odot\ww_L$ \citep{Gunasekar2018} for different depth $L$. In order to run these experiments we needed to implement an efficient algorithm for computing natural gradients in these models: naively calculating and then inverting the Fisher information matrix is computationally inefficient and numerically unstable. We therefore developed an algorithm that exploits the structure in the Fisher information matrix, extending the work of \citet{Bernacchia2018} for diagonal networks. The details of our algorithms can be found in Appendix \ref{sec:appdx_ng_logistic}.

\begin{figure}[t!]
    \includegraphics[width=\columnwidth]{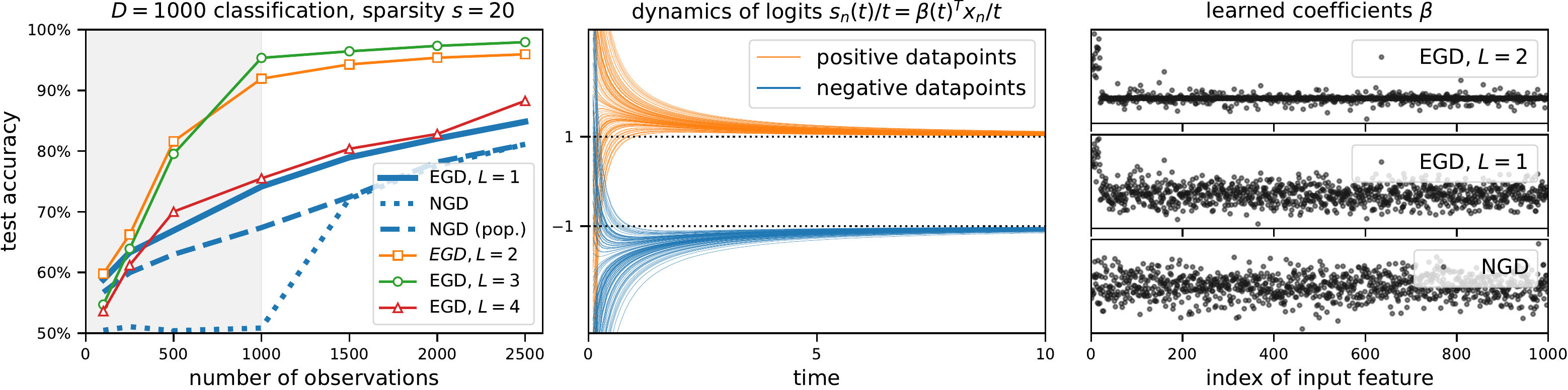}
    \caption{\label{fig:1000d_classification}NGD and EGD in a 1000 dimensional sparse classification task, where the ground truth classifier has $20$ non-zero components. \emph{Left:}
    Test accuracy of EGD depends on parametrizaion. When there are there are fewer datapoints than dimensions, EGD with 2 or 3-layer diagonal parametrization can reach up to 90\% accuracy. By contrast, when averaging the Fisher infromation on training samples (dotted line) NGD performs at chance level when $N < D$. It performs worse than EGD even when $N \geq D$, or when using the population Fisher calculated on a much larger set of samples (dashed line). \emph{Middle:} Under NGD, when $N<D$, logits of the model grow linearly, proportional to the binary label. \emph{Right:} Coefficient vector $\bb$ learnt by EGD in different architectures and NGD when $N=2500$: In the 2-layer diagonal network, corresponding to $\ell_1$ implicit regularisation, $\beta$ becomes sparse. In the $1$-layer model, the solution is substantially less sparse, but the overall structure is learnt. NGD fails to learn the sparse structure.}
\end{figure}

In Experiment 1 we illustrated EGD and NGD in a 2D toy classification dataset. Positive and negative classes were generated such that they are separable by the the axis-aligned separator, but there exists a non-axis-aligned separator with a higher margin. Based on the findings of \citet{Gunasekar2018} we expected EGD to find the large margin solution when $L$ is low, and the axis-aligned solution when $L$ is sufficiently large. The results in panels a and b of Figure \ref{fig:2d_classification} confirm these predictions. Figure \ref{fig:2d_classification}c-d illustrate the  parametrization-independence of NGD: it converges to the same solution irrespective of parametrization. The solution is different from both the EGD solutions.

In Experiment 2 we focused on generalization performance. We generated a 1000-dimensional dataset with standard Gaussian $X$, and a sparse ground-truth separator whose first 20 components were set to $1$, the rest were $0$. Methods with explicit or implicit regularization towards sparse solutions should enjoy good generalization even when $N<D$. Confirming our expectations, we observed that EGD in diagonal parametrizations ($L=2$, $L=3$) performed best on this task. The deeper diagonal model ($L=4$) was on par with the shallow solution, we expect that our 2 million EGD steps were simply not long enough for the implicit regularization to kick in \citep{Moroshko2020}. The NGD solution on the other hand completely fails to generalize when $N<D$ and does relatively poorly even as $N>D$. This catastrophic performance is remedied by averaging the Fisher information on a larger dataset - i.\,.e.\ using the population Fisher \citep{Amari2020}, but even this variant of NGD fails to match the performance of EGD. The middle panel of Figure \ref{fig:1000d_classification} validates the predictions of Theorem \ref{four}: logits from the model converge to $t\yy$. Finally, the right-hand panels of Figure \ref{fig:1000d_classification} show that NGD was unable to identify the sparse structure, which the diagonal model infers best, and even the shallow model approximately finds.

\section{Matrix Completion with Natural gradient descent}

As we have seen in Section \ref{sec:background_deepmf}, EGD in the deep matrix product parametrization $\bb = W_1\cdots W_L$ converges to low-rank solutions. However, when $L=1$, i.e. when we run EGD directly on $\bb$, the solution we find is trivial: entries of $\bb$ where we have observation will converge to the observed value, while other entries won't move. Due to parameter-invariance, NGD cannot differentiate between parametrizations of different depth, it is natural to expect that it will fail the same way as EGD does when $L=1$. Let's look at NGD in matrix completion.

In matrix completion we minimize the squared reconstruction error, which corresponds to the log loss in an isotropic Gaussian observation model with $\bb$ as mean. In a Gaussian model, the Fisher Information Matrix of $\bb$ becomes $F(\bb)=\frac1{\sigma_n^2}I$ , where $\sigma_n^2$ is the observation noise. The observation noise $\sigma_n^2$ is assumed a constant, and is inconsequential here as it cancels with the $\frac{1}{\sigma_n^2}$ term in the log loss. Consequently, without loss of generality, we can consider $F(\bb)$ the identity.
%and from this it follows that NGF in any parametrization is equivalent to EGF in the direct parametrization ($\bb = \ww$).
\begin{statement}
    Let's apply NGF for the problem of matrix completion. EGF in the direct parametrization ($\bb = \ww$) is equivalent to NGF under any parametrization $\theta$ for which $J=\frac{\partial \ww_t}{\partial \theta_t}$ is full rank.
\end{statement}
The proof of the statement can be found in Appendix E.
This implies that NGF will completely fail to generalize, i.\,e.\ make an accurate prediction of any unobserved entry of the matrix.

\begin{figure}[t!]
\includegraphics[width=\columnwidth]{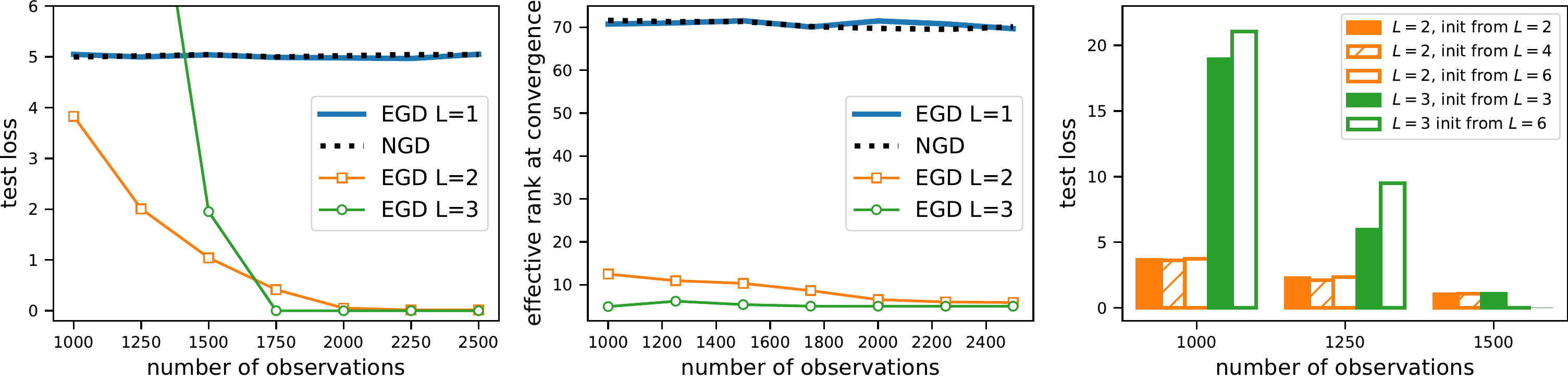}
\caption{\label{fig:EGD-NGD_prop}Performance of unregularized EGD and NGD in rank-5 matrix completion tasks using different architectures. \emph{Left and Middle:} Using deep matrix product parametrizations with $L\geq 2$ layers, EGD can reach low training error and identify low-rank solutions even when the number of observations is small. By contrast, NGD in the same problem works similarly to EGD in the naive parametrization and fails to generalize completely. \emph{Right:} 2 (orange) and 3 (green) layer models were initialized by collapsing randomly initialized deeper models to test the effect of initialization separately from the effect of EGD dynamics. Initialization plays a negligible role in the inductive bias of EGD in deep matrix factorization.}
\end{figure}

Figure \ref{fig:deepmf_kernels} illustrates the key property of the dynamics which allows EGD to generalize in deeper parametrizations. Each panel shows values of the neural tangent kernel (NTK) \citep{Jacot2018}, its equivalent object for NGF called the natural NTK \citep{Rudner2019}, or their discretized versions. For matrix factorization the NTK $\etK(\theta)$ is a $(D\times D)\times(D\times D)$ tensor which depends on the parameters $\theta$ where $k_i,j,k,l(\theta)$ measures how much the entry $\beta_{i,j}$ moves in reaction to a negative loss gradient w.r.t. $\beta_{k,l}$. In these visualizations, we set $D=11$, and we plot the heatmap of $k_{i,j,5,5}$. We can see that when we parametrize $\bb$ directly, the NTK is simply the identity, only the entry $\beta_{5,5}$ moves. However, when $L=2$, EGD can now respond to the gradient signal at $\beta_{5,5}$ by moving entries in the fifth row of $W_1$ or in the fifth column of $W_2$. This, in turn, might result in moving $\beta_{i,5}$ or $\beta_{5, i}$ as well. This explains the cross pattern seen in Figure \ref{fig:deepmf_kernels} first panel in the second row. This non-identity NTK is what allows generalization to happen as 'information flows' from observations to unobserved entries of $\bb$. However, in NGF, the natural NTK remains the identity irrespective of parametrization. This is true even in the approximately invariant NGD.

For our Matrix Factorization experiments we had to develope a scalable and numerically stable algorithm for computing the natural gradient. We did this by extending the algorithm of \citet{Bernacchia2018} to matrix factorization. Exploiting the structure of the Jacobian in the deep matrix product parametrization ($\bb = W_1\cdots W_L$) we calculate the natural gradient w.r.t. $W_l$ as $\ngd_{W_l} \loss = \frac{1}{L} B_l^\top {}^+  \ngd_{\bb}\loss  A_l^+$, where $A_l = \prod_{i=1}^{l-1}W_i$ and $B_l = \prod_{i=l+1}^{L}W_i$. We note that $A_i$ and $B_i$ are matrices that are readily computed during the forward and backward pass of reverse-mode automatic differentiation of the loss. The details of the derivation can be found in Appendix \ref{sec:appdx_ng_mf}.

Using this algorithm, in Figure \ref{fig:EGD-NGD_prop} we experimentally verify that NGD finds a trivial optimum in deep matrix product parametrizations of varying depth. We follow the experimental setup of \citet{Arora2019} and reproduce their results for EGD. We performed an extensive grid search of hyper-parameters and found no setting where NGD would achieve non-trivial performance.

\section{Summary and Discussion}
\vspace{-0.3cm}
Inductive biases of gradient-based learning are driven to a large extent by the way we parametrize our hypothesis. Natural gradient descent (NGD), on the other hand, ignores the parametrization and implicitly optimizes over the manifold of hypothesis. This invited the question whether NGD exhibits any of the useful implicit regularization that EGD has been shown to have. We characterized the behaviour of NGD over logistic loss, and found that in the overparametrized regime, NGD converges to the ordinary least squares interpolant of training labels. This is in contrast with the large-margin-type behaviour EGD exhibits. In experiments we found that in the models we studied, NGD fails to generalize as well as EGD with the right parametrization.

\vspace{-0.35cm}
\subsection{Other related work}
\vspace{-0.2cm}

\textbf{Approximate NGD algorithms:} Since exact NGD is computationally prohibitive, a great deal of research has been devoted to developing approximate NGD algorithms for deep leaning: K-FAC \cite{Grosse2016, Martens2015} exploits the approximately Kronecker structure of the Fisher information matrix, while, while \citet{Bernacchia2018} start from exact gradient descent in linear neural networks and then apply the formula verbatim to the non-linear case. Another line of work aims at improving the invariance properties of NGD algorithms bringing them closer to ideal of NGF \citep{Song2018, Luk2018}. Our motivation differs in that are not focused on designing better NGD algorithms, instead we raise the question whether closer approximation of NGF is desirable in the first place. In order to perform experiments that validate our findings we develop efficient exact natural gradient descent algorithms in overparametrized linear models extending the work of \cite{Bernacchia2018}.

\textbf{Convergence Rates for NGD:} The main reason for using NGD in deep learning is the intuitive notion it might speed up convergence by virtue of being invariant to parametrization \citep{Amari1997, Pascanu2013, Martens2014}. This intuition is backed up by theory: \cite{Amari1998} proved fast convergence on a quadratic loss; \cite{Bernacchia2018} proved fast convergence for deep linear models under quadratic loss; more recently, \cite{Zhang2021} gave a proof of fast convergence which holds for a broad class of overparametrized networks and also extends to K-FAC; \cite{Rudner2019} analysed NGD in the neural tangent kernel (NTK) regime. Our work differs in that our primary interest is not whether NGD converges fast, but to better understand and illustrate possible trade-offs between fast convergence and generalization.

\textbf{Generalization of NGD:} \cite{Wilson2017} were the first to propose that faster convergence may come at the cost of diminished generalization performance in deep learning. Much like our work, \cite{Wilson2017} provided illustrative examples where different methods reach qualitatively different solutions. They focused on adaptive learning rate algorithms like Adam, but due to the connections between Adam and the empirical Fisher information, one might speculate that their findings would extend to NGD as well \cite{Zhang2019} argued against the notion that NGD may not generalize well, and supported their argument with a generalization bound which holds for both NGD and EGD. However, generalization bounds often fail to predict the empirically observed performance of deep learning \cite[][see e.\,g.]{Jiang2019}. In a setting most closely resembling our work \cite{Amari2020} studied generalisation of preconditioned GD for minimising squared loss and found that the optimal preconditioner depends on several factors: EGD generalises better for clean labels, but in scenarios like misspecification or when the labels are noisy, NGD may have an advantage. Finally, \cite{Wadia2021} argued that second order information of the input data - which some second-order optimisation methods can't utilize well, is key to good generalisation in some neural network architectures. This general connection is related to our Theorems \ref{one} and \ref{two}.

\vspace{-0.2cm}
\subsection{Q\&A}
\vspace{-0.2cm}

\emph{Q: How about stochastic gradients?} Following \cite{Gunasekar2017,Gunasekar2018,Arora2019} we analysed only full-batch gradient descent. This allowed us to prove properties of gradient flow, i.\,e.\ in the limit of infinitesimally small learning rates, which is not a meaningful limit in SGD. This line of work demonstrates that useful inductive biases exist in gradient-based learning even in the absence of gradient noise. Indeed, recent empirical evidence suggests that stochasticity may not be necessary for good generalization in deep networks \citep[see e.\,g.][]{Geiping2021}. In practice, we expect the question of generalization to be complex, with multiple factors like stochasticity or parametrization-dependence playing a role. We propose that analysing NGD is a useful tool in understanding this complex interplay, as it acts as a form of ablation by eliminating parametrization-dependence.

\emph{Q: Does this mean NGD does not generalize well?} Not necessarily. We show that there are cases where it does not, but it is possible that in other situations the inductive biases of NGD are more helpful than those of EGD + parametrization, especially when trained on large data. Intuitively, our theorems suggest that NGD may be \emph{too efficient} at minimising the training loss at the cost of poorer generalisation. However, in our experiments we saw that averaging the Fisher information matrix over test data may remedy this, which would be in line with the practical recommendation of \cite{Pascanu2013}. Empirical evidence for generalization in exact NGD is sparse due to the computational cost. Some works report good test performance using approximate methods \citep{Grosse2016, Bernacchia2018} or small models \citep{Pascanu2013}, but since the focus in these works was on demonstrating the usefulness of new methods, it is questionable how thorough these comparisons were. \cite{Zhang2019, Amari2020} studied generalisation of natural gradient methods theoretically in limited settings and provided some empirical evidence to support their claims. A systematic empirical investigation similar to \citep{Wilson2017} may be more informative on this question.

\emph{Q: Does initialization play a role?} Changing the parametrization may influence generalisation in at least three ways: (1) initialization, (2) training dynamics, and (3) constraining the hypothesis space. As weights are often initialized from a parameter-wise independent distribution, these may give rise to a non-trivial and parameter-dependent initial distribution in hypothesis-space. \cite{VallePerez2019} argued that in deep networks, this manifests as a form of simplicity bias. In our models, initialisation has a simlicity bias, too: if  matrices $W_1,\ldots,W_L$ are drawn from an isotropic Gaussian, their product $\bb$ will be effectively low-rank with an increasing probability as $L$ increases. By replacing EGD by NGD, we only eliminate the influence of parametrization on training dynamics, but the effects of initialization remain. It is therefore important to disentangle relative importance of initialisation (1), and parameter-dependent dynamics (2). To this end, we designed a set of additional experiments, where we controlled the effect of initialization separately from the effects through dynamics. We initialised deep matrix factorisation models by drawing each component matrix $W_1$ as a product of independent Gaussian matrices, then ran EGD. Thus, we were able to create models behaving like a $L=6$ layer model at initialization but $L=2$ layer model during training. We found that the effect of initialization on  generalization performance was negligible compared to the effects of training dynamics  (Figure \ref{fig:EGD-NGD_prop}.c), at least in deep linear models. We further note that initialization plays a very important role in the limit of infinitely wide networks, too, where initialization scale determines whether the network behaves like a linear kernel machine, or more like the behaviour we describe in finite networks here \citep{Woodworth2020}.

\emph{Q: What if you calculate Fisher information on test data?} \citet{Pascanu2013} noted that in deep learning, averaging the Fisher information over test data, rather than training data seemingly improves performance. In our theorems and experiments we assume averaging over the training data, sometimes referred to as the sample Fisher information \citep[see e.\,g.\ ][]{Amari2020} as this makes our proofs tractable. In our high-dimensional sparse classivication experiment in Figure \ref{fig:1000d_classification} we tested the performance of NGD when the Fisher information is averaged over a large number of samples, called the population Fisher, and we found that generalisation performance improved, but still did not match that of EGD, especially when sparsity-inducing diagonal parametrisations are used.

\emph{Q: What about other forms of natural gradients?} In addition to the \emph{Fisher-Rao} natural gradients that we consider here, there are other forms of natural gradients, such as those based on the Wasserstein metric \citep{Li2018, Arbel2019}. When considering this broader family of natural gradient descent, it is natural to ask if the choice of metric may give rise to different inductive biases in NGD similarly to how different parametrizations effect EGD differently. We think this is a fertile area for future research.

\subsubsection*{Acknowledgements}

We thank Francisco Vargas for useful discussions on natural gradient descent.

\subsubsection*{Reproducibility Statement}

Python code to reproduce our results (including all Figures except Figure~\ref{fig:parameter-to-hypothesis-map}) can be found in the following (anonymized) git repository which contains unit tests and documentation:\\
\url{https://anonymous.4open.science/r/deeplinear-2F10}
\bibliography{iclr2022_deeplinear}
\bibliographystyle{iclr2022_conference}

\appendix

\section{Useful lemmas}
We will need the following lemma in the proof of Theorem 1,4.
    \begin{lem}
    \label{lem1}
        If we solve a separable classification problem with natrual gradient flow with separator $\bb$ and output $\s$, then the gradient and the Fisher information matrix are the following (in case of linear network this means $\s=X\bb$):
        \begin{equation}
            [\nabla_\s \loss(\s)]_i = -y_i(1-\phi(y_i\s_i))
        \end{equation}
        \begin{equation}
            [F(\s)]_{i,j} = \delta_{i,j} \phi(\s_i)(1-\phi(\s_i)
        \end{equation}
    \end{lem}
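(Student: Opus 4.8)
The plan is to treat this as a direct computation from the definition of the logistic loss and of the Fisher information, exploiting the fact that the loss separates across the coordinates of $\s$. First I would fix notation for the sigmoid, writing $\phi(z) = (1+e^{-z})^{-1}$, and record the two elementary identities I will reuse: the derivative identity $\phi'(z) = \phi(z)(1-\phi(z))$ and the reflection identity $\phi(-z) = 1-\phi(z)$. Since labels are encoded as $y_i \in \{-1,+1\}$, the per-example logistic loss is $l(\s_i, y_i) = -\log \phi(y_i \s_i) = \log(1 + e^{-y_i \s_i})$, and the total loss is the sum $\loss(\s) = \sum_i l(\s_i, y_i)$, in which the $i$-th term depends only on the $i$-th coordinate $\s_i$.

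For the gradient I would differentiate a single term with respect to $\s_i$. Using the chain rule together with $\tfrac{e^{-z}}{1+e^{-z}} = 1-\phi(z)$ gives $\partial_{\s_i} l = -y_i\bigl(1-\phi(y_i \s_i)\bigr)$, which is exactly the claimed first identity; because $l(\s_j,y_j)$ does not depend on $\s_i$ for $j \neq i$, no cross terms arise.

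For the Fisher matrix the key point is that it must be evaluated as an expectation over labels drawn from the \emph{model} $p(y_i\mid \s_i)$, with $p(y_i=1\mid\s_i)=\phi(\s_i)$, rather than over the observed labels. I would start from $F(\s)_{ij} = \E_{Y\mid\s}\bigl[\partial_{\s_i}\loss\,\partial_{\s_j}\loss\bigr]$. The off-diagonal entries vanish because $\partial_{\s_i}\loss$ depends only on $y_i$, the labels are independent across examples under the model, and each score has zero mean (the standard score identity, which I would verify here by averaging $-y_i(1-\phi(y_i\s_i))$ over the two label values). For the diagonal I would substitute the gradient from the previous step, square it so that the factor $y_i^2=1$ drops out, and average the two label cases weighted by $\phi(\s_i)$ and $1-\phi(\s_i)$; after applying the reflection identity the two contributions combine, yielding $F(\s)_{ii}=\phi(\s_i)(1-\phi(\s_i))$ as claimed.

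I expect the main obstacle to be conceptual rather than computational: being careful that $F$ is the model-expected quantity, not the empirical Hessian evaluated at the observed labels. It is worth remarking as a cross-check that the two coincide here, since the observed Hessian $\partial_{\s_i}^2 l = \phi(y_i\s_i)(1-\phi(y_i\s_i))$ is itself independent of $y_i$ by the reflection identity; this explains why the same diagonal form $\phi(\s_i)(1-\phi(\s_i))$ arises whether one computes the expected outer product of scores or the expected Hessian, and it is what will make the subsequent natural-gradient analysis tractable.
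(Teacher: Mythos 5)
Your proposal is correct and follows essentially the same route as the paper's proof: differentiate the separable logistic loss coordinate-wise to get $-y_i(1-\phi(y_i\s_i))$, then evaluate the outer-product expectation under the model's label distribution, with off-diagonals vanishing by the zero-mean score identity and the diagonal collapsing to $\phi(\s_i)(1-\phi(\s_i))$ via the reflection identity. Your added cross-check that the expected Hessian $\phi(y_i\s_i)(1-\phi(y_i\s_i))$ is label-independent and agrees with the score outer product is a nice touch not present in the paper, but the substance of the argument is the same.
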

    \begin{proof}
    First note that $\phi(u)=\frac{1}{1+e^{-u}}$ and $\phi(-u)=1-\phi(u) = \frac{e^{-u}}{1+e^{-u}}$.
        \begin{equation}
            \label{gd1}
            [\nabla_{\s} \loss(\s)]_i = \frac{\partial\loss}{\partial s_i} = \frac{\partial}{\partial s_i} \sum_{n=1}^N \log(1+e^{-y_n s_n}) = \frac{-y_i e^{-y_is_i}}{1+e^{-y_i s_i}} = -y_i (1-\phi(y_i s_i))
        \end{equation}
        Using Equation (\ref{gd1}) we get the following:
        \begin{equation}
        \begin{split}
            [F(\s)]_{i,j} &= [\ee_{\yy} [\nabla_{\s} \loss(\s) \nabla_{\s}^{\top} \loss(\s)] ]_{i,j} = [\ee_{\yy} [y_i (1-\phi(y_i s_i)) y_j (1-\phi(y_j s_j))]]_{i,j} = \\
            &= \Bigg\{ 
            \begin{matrix}
                \ee_\yy [ (1-\phi(y_i s_i))^2 ] & if \quad i=j \\
                \ee_{y_i} [ y_i (1-\phi(y_i s_i)) ] \ee_{y_j} [y_j (1-\phi(y_j s_j)) ] & if \quad i\neq j
            \end{matrix}
            = \\
            &= \Bigg\{
            \begin{matrix}
                \phi(s_i)(1-\phi(s_i)) & if \quad i=j \\
                \ee_{y_i} [y_i (1-\phi(y_i))] \ee_{y_j} [ y_j (1-\phi(y_j))] & if \quad i \neq j
            \end{matrix}
        \end{split}
       \end{equation}
       Now we get the following:
       \begin{equation}
           \ee_{y_i} [y_i (1-\phi(y_is_i))] = \phi(s_i)(1-\phi(s_i)) - (1-\phi(s_i))(1-\phi(-s_i)) = 0
       \end{equation}
       Hence we get:
       \begin{equation}
            \label{FS}
           [F(\s)]_{i,j} = \delta_{i,j} \phi(s_i) (1-\phi(s_i)).
        \end{equation}
    \end{proof}
The next lemma is essential in all computation connected to matrix completion with matrix factorization.  
\begin{lem}
If we assume that the product matrix $\bb$ comes from a Gaussian distribution with fixed $\sigma_n I$ standard deviation and $\mu$ mean, then the Fisher information matrix of the product matrix in matrix factorization is $F(\bb)=\frac{1}{\sigma_n^2}I$.
\end{lem}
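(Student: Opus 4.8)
The plan is to use the fact that $\bb$ enters the Gaussian observation model only as the mean parameter, and to recall the textbook identity that the Fisher information of a Gaussian with respect to its mean equals the inverse covariance, regardless of where the mean sits. Following the main text, I would first make the observation model explicit: the entries we reconstruct are modelled as $\yy \sim \mathcal{N}(\bb, \sigma_n^2 I)$ with $\bb$ (suitably vectorized) playing the role of the mean. The per-observation negative log density is then
\begin{equation}
\loss(\bb) = \frac{1}{2\sigma_n^2}\|\yy - \bb\|^2 + \text{const},
\end{equation}
where the additive constant collects the normalizer terms that do not depend on $\bb$.

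The second step is to differentiate to obtain the score, $\nabla_\bb \loss(\bb) = -\tfrac{1}{\sigma_n^2}(\yy - \bb)$, and substitute it into the outer-product definition of the Fisher information used elsewhere in the paper (cf. Lemma \ref{lem1}), taking the expectation over the model distribution $\yy \mid \bb$:
\begin{equation}
F(\bb) = \ee_{\yy \mid \bb}\!\left[\nabla_\bb \loss \, \nabla_\bb^\top \loss\right] = \frac{1}{\sigma_n^4}\, \ee_{\yy \mid \bb}\!\left[(\yy - \bb)(\yy - \bb)^\top\right].
\end{equation}
The inner expectation is by definition the covariance of the observation model, namely $\sigma_n^2 I$, so the $\sigma_n^4$ in the denominator partially cancels and leaves $F(\bb) = \tfrac{1}{\sigma_n^2} I$, which is independent of $\bb$.

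There is essentially no hard step here: the whole computation is a routine property of the Gaussian family, and the result is constant in $\bb$ precisely because the mean appears in the density only through the quadratic term $\|\yy - \bb\|^2$. The only point I would be careful to justify is differentiating with respect to the product matrix $\bb$ itself rather than the underlying factors $W_1,\ldots,W_L$; this is legitimate because the natural-gradient arguments that follow rely on the invariance of NGF under reparametrization, so it suffices to compute the Fisher in the product parametrization and transport it through the Jacobian. As an equally short alternative worth mentioning, one may instead invoke the Hessian form of the Fisher information: since $\loss$ is quadratic in $\bb$, the Hessian $\nabla_\bb^2 \loss = \tfrac{1}{\sigma_n^2} I$ is deterministic, and its expectation trivially reproduces the same identity.
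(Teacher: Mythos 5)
Your proof is correct and follows essentially the same route as the paper's: write down the Gaussian score $\nabla_\bb \log p = \tfrac{1}{\sigma_n^2}(\yy-\bb)$, take the expected outer product, and use that the model covariance is $\sigma_n^2 I$ so that $\tfrac{1}{\sigma_n^4}\cdot\sigma_n^2 I = \tfrac{1}{\sigma_n^2}I$. The only differences are cosmetic (notation, and your optional aside about the Hessian form of the Fisher).
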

\begin{proof}
    Because of the assumption:
    \begin{equation}
        p(X | \theta)=\mathcal{N}(X | \mu,\sigma_n I),
    \end{equation}
    where $\theta$ is the parameters of the model ($\mu$, $\sigma_n$).
    \begin{equation}
        \nabla_{\theta} \log p(X | \theta)= \nabla_{\theta} \log{(\frac{1}{\sigma_n \sqrt{2 \pi}} e^{-\frac{(X-\mu)^2}{2\sigma_n^2}})}=\nabla_{\mu} (\log(\frac{1}{\sigma_n \sqrt{2 \pi}}) - \frac{(X-\mu)^2}{2\sigma_n^2})=\frac{(X-\mu)}{\sigma_n^2},
    \end{equation}
    therefore we can compute the Fisher as
    \begin{equation}
        F(\bb)=\ee_{X \sim p(X | \theta)}\left[ \nabla_{\theta} \log p(X | \theta) \nabla_{\theta}^\top \log p(X | \theta)\right]=\ee_{X \sim p(X | \theta)}\left[\frac{(X-\mu)(X-\mu)^\top}{\sigma_n^4}\right]=\frac{\sigma_n^2 I}{\sigma_n^4}=\frac{1}{\sigma_n^2} I
    \end{equation}
\end{proof}

\section{Exact natural gradients in linear models}

\subsection{Simple linear Model logistic loss}
\label{sec:appdx_ng_linear}
To obtain the natural gradient $\ngd_{\bb} \mathcal{L}$ with respect to $\bb$, we have to solve the following linear system:
\begin{equation}
    F(\bb) \ngd_{\bb} \mathcal{L} = \nabla_{\bb} \mathcal{L}, \label{eqn:linsystem}
\end{equation}
where $F(\bb)$ is the Fisher information matrix and $\nabla_{\bb} \mathcal{L}$ is the (Euclidean) gradient. Under the logistic loss the Fisher information matrix becomes
\begin{equation}
    F(\bb) = X^\top \operatorname{diag}[\phi(X\bb)\odot\phi(-X\bb)] X,
\end{equation}
where $\phi$ is the logistic sigmoid which is applied elementwise to vector arguments and $\odot$ denotes elementwise product. The gradient of the logistic loss is as follows:
\begin{equation}
    \nabla_{\bb} \mathcal{L} = - (y \odot X)^\top \phi(-(y\odot X)\beta) 
\end{equation}
Mathematically, we could use these expressions and solve the linear system Equation (\ref{eqn:linsystem}), however, this would be potentially numerically unstable for reasons outlined below. Let's introduce the notation $\tilde{X} = y\odot X$ and $u = \tilde{X}\bb$ to simplify the formul\ae. Due to symmetry, in the Fisher information all occurrences of $X$ can be replaced by $\tilde{X}$. This gives rise to the following expressions for the Fisher information matrix:
\begin{equation}
    F(\bb) = \tilde{X}^T \operatorname{diag}[\phi(u)\odot \phi(-u)] \tilde{X}
\end{equation}
and the gradient:
\begin{equation}
    \nabla_{\bb} \mathcal{L} = - \tilde{X} \phi(-u).
\end{equation}
As the classifier gets better, components of $u$ increase and diverges to $+\infty$. As a consequence both $F(\bb)$ and $\nabla_{\bb} \mathcal{L}$ are expected to become small, from the term $\phi(-u)$. This could lead to issues with numerical stability. To solve this, we rewrite both using following identity:
\begin{equation}
    \phi(-u) = \frac{1}{1 + e^u} = \frac{e^{-u}}{1 + e^{-u}} = e^{-u} \phi(u)
\end{equation}
obtaining:
\begin{align}
    F(\bb) &= e^{-u_{max}}\tilde{X}^T \operatorname{diag}[e^{-u + u_{max}}\phi^2(u)] \tilde{X}\\
    \nabla_{\bb} \mathcal{L} &= - e^{-u_{max}} \tilde{X} e^{-u + u_{max}}\phi(u),
\end{align}
where $u_{max}$ is the largest entry of $u$. We have thus isolated the term responsible for poor numerical performance into a multiplicative term $e^{-u_{max}}$ which we can simply leave out when solving the linear system. The remaining terms are well-behaved even as $u$ increases, provided that the difference between elements of $u$ is not too large.

\subsection{Diagonal Linear Network under Logistic Loss \label{sec:appdx_ng_logistic}}

In a diagonal linear network we express $\bb = \ww_1\odot \cdots \odot \ww_L$. Here we will discuss how we compute the natural gradient with respect to $\mathbf{w_l}$.

We now solve the following (underdetermined) system of linear equations, which we write using using Einstein summation notation:
\begin{equation}
    F(\bb)_{i,j} J_{j,l,k} \ngd_{\mathbf{w}_{l,k}} \mathcal{L} = \nabla_{\bb_i} \mathcal{L},
\end{equation}
where $J_{j,l,k} = \frac{\partial \bb_j}{\partial \mathbf{w}_{l, k}}$ is the Jacobian of the mapping from $\mathbf{w}$ to $\bb$. In this specific parametrization, most entries of $J$ is non-zero. Let's denote the product of the first $l-1$ weight vectors as $\mathbf{a}_l$ and the product of the last $L-l-1$ weight vectors as $\mathbf{b}_l$ so we can have:
\begin{equation}
    \bb_i = \underbrace{\mathbf{w}_{1,i}\cdots \mathbf{w}_{l-1,i}}_{\mathbf{a}_{l,i}} \mathbf{w}_{l,i} \underbrace{\mathbf{w}_{l+1,i} \cdots \mathbf{w}_{L,i}}_{\mathbf{b}_{l,i}} = \mathbf{a}_{l,i} \mathbf{w}_{l,i} \mathbf{b}_{l,i}.
\end{equation}
Thus, the Jacobian becomes:
\begin{equation}
    J_{i,l,k} = \left\{\begin{matrix} \mathbf{a}_{l,i} \mathbf{b}_{l,i}&&\text{if } i=j\\0&&\text{if } i\neq j\end{matrix}\right.
\end{equation}
Substituting this back, we have to solve the following system of equations:
\begin{align}
    F(\bb)_{i,j} J_{j,l,k} \ngd_{w_{l,k}} \mathcal{L} &= \nabla_{\bb_i} \mathcal{L}\\
    F(\bb)_{i,j} \mathbf{a}_{l,j} \mathbf{b}_{l,j} \ngd_{w_{l,j}} \loss &= \nabla_{\bb_i} \mathcal{L}.\\
\end{align}
To ensure numerical stability, we use the same trick as in \ref{sec:appdx_ng_logistic}.

Since the above system of equations is underdetermined, we could choose different solutions. In our experiments we used the \texttt{pytorch.linalg.lstsq} least squares solver which finds the solution with the lowest $\ell_2$ norm.

\subsection{Separable classification}

    First note that in our model ($\forall n\in\{1,2,\cdots N\}$, $\phi(s)=\frac{1}{1+e^{-s}}$)
    \begin{equation}
    \begin{split}
        p(y_n=1|\xx_n,\bb)&=\frac{1}{1+e^{-y_n\xx_n^\top\bb}}=\phi(-y_n\xx_n^\top\bb) \\
        p(y_n=-1|\xx_n,\bb)&=1-\frac{1}{1+e^{-y_n\xx_n^\top\bb}} = 1-\phi(-y_n\xx_n^\top\bb)
    \end{split}
    \end{equation}
    So the loss function is ($\forall n\in\{1,2,\cdots N\}$)
    \begin{equation}
        \ell(y_n\xx_n^\top\bb) = \log(1+e^{-y_n\xx_n^\top\bb})
    \end{equation}
    and
    \begin{equation}
        \loss (\bb) = \sum_{n=1}^{N} \log(1+e^{-y_n\xx_n^\top\bb})
    \end{equation}
    Until now this did not depend on the parametrization. Now look at the parametrizations we used in our article.\\
    If we use a \textbf{fully connected network} the gradient is the following:
    \begin{equation}
    \begin{split}
        \nabla_{\bb} \loss(\bb) &= \sum_{n=1}^N \nabla_{\bb} \log(1+e^{-y_n\xx_n^\top\bb}) = \sum_{n=1}^N  \frac{-y_n\xx_n e^{-y_n\xx_n^\top\bb}}{1+e^{-y_n\xx_n^\top\bb}}=\\
        &=\sum_{n=1}^N  \frac{-y_n\xx_n }{1+e^{y_n\xx_n^\top\bb}}=\sum_{n=1}^N  -y_n\xx_n(1-\phi(y_n\xx_n^\top\bb)).
    \end{split}
    \end{equation}
    The Fisher information matrix is the following:
    \begin{equation}
    \begin{split}
        F(\bb)&=\ee_X[\ee_{Y|X}[\nabla_{\bb}\ell(-y_n\xx_n^\top\bb)\nabla_{\bb}^\top\ell(-y_n\xx_n^\top\bb)]] = \\
        &=\frac{1}{N} \sum_{n=1}^N \ee_{Y|X} [\xx_n\xx_n^\top(1-\phi(y_n\xx_n^\top\bb))^2] =\\
        &=\frac{1}{N} \sum_{n=1}^N \xx_n\xx_n^\top(\phi(\xx_n^\top\bb)(1-\phi(\xx_n^\top\bb))^2+(1-\phi(x_n^\top\bb))(1-\phi(-\xx_n^\top\bb))^{2}) = \\
        &= \frac{1}{N} \sum_{n=1}^N \xx_n\xx_n^\top(\phi(\xx_n^\top\bb)(1-\phi(\xx_n^\top\bb))^2+(1-\phi(x_n^\top\bb))\phi^{2}(\xx_n^\top\bb)) = \\
        &= \frac{1}{N} \sum_{n=1}^N \xx_n\xx_n^\top\phi(\xx_n^\top\bb)(1-\phi(\xx_n^\top\bb))
    \end{split}
    \end{equation}
    If we use a \textbf{diagonal network} $\bb=\ww_1 \odot\ww_2\odot\cdots\odot\ww_{L-1}\odot\ww_L$, where \\$\ww = \begin{pmatrix}
    \ww_1^\top & \ww_2^\top & \cdots &\ww_L^\top
    \end{pmatrix}^\top$. The gradient is the following:
    \begin{equation}
        \nabla_{\bb} \loss = J^\top \nabla_{\ww} \loss 
    \end{equation}
    where $J$ (the Jacobian) is the following
    \begin{equation}
    \begin{split}
        J = \begin{pmatrix}
            \frac{\partial \bb}{\partial \ww_1} & \cdots & \frac{\partial \bb}{\partial \ww_L}
        \end{pmatrix}.
    \end{split}
    \end{equation} 
    where
    \begin{equation}
       \Bigg[ \frac{\partial \bb}{\partial \ww_n}\Bigg]_{i,j} = \frac{\partial \bb_i }{\partial [\ww_n]_j} = \delta_{i,j} \prod_{k=1,k\neq i}^N [\ww_k]_i
    \end{equation}
    The Fisher information matrix is the following:
    \begin{equation}
        F(\ww) = J^\top F(\bb) J
    \end{equation}

\subsection{Matrix factorization}
\label{sec:appdx_ng_mf}
Before we compute the natural gradient of matrix factorization let us introduce some notations: $\bb = W_1W_2 \dots W_L$, as before and
\begin{equation}
    \mathbf{ \theta } = vec (\bb),
\end{equation}
\begin{equation}
    \mathbf{w}= vec (W_1, W_2, \dots, W_L),
\end{equation}
where \emph{vec} vectorizes the matrices to obtain a column vector. $\mathbf{\theta}$ is a reparametrization of $\mathbf{w}$, so $\mathbf{\theta}=\mathcal{P}(\mathbf{w})$ and let $J=\frac{\partial \mathbf{\theta}}{\partial \mathbf{w}}$. 
With this notation, let's compute the natural gradient with respect to the parametrization $\mathbf{w}$.
\begin{equation}
    \ngd_{\mathbf{w}} \loss = F(\mathbf{w})^{-1} \nabla_{\mathbf{w}} \loss=(J^\top F(\mathbf{\theta}))J)^{-1} (J^\top \nabla_{\mathbf{\theta}} \loss) = J^{-1} F(\mathbf{\theta})^{-1} \nabla_{\mathbf{\theta}} \loss
\end{equation}
We use the assumption that $J$ is full rank and because of $F(\theta)=I$ is invertible $(J^\top F(\mathbf{\theta}))J)^{-1}=J^{-1} F(\mathbf{\theta})^{-1} J^{-\top}$. Thus, the natural gradient simplifies to 
\begin{equation}
    \ngd_{\mathbf{w}} \loss = J^{-1} \nabla_{\mathbf{\theta}} \loss
\end{equation}
and multiplying by $J$ we obtain
\begin{equation}
    \label{J_eq}
    J \ngd_{\mathbf{w}} \loss = \nabla_{\mathbf{\theta}} \loss.
\end{equation}
We can consider the Jacobian like L consecutive matrices
\begin{equation}
    J=[J_1 J_2 \dots J_L]
\end{equation}
where $J_i=\frac{\partial \mathbf{\theta}}{\partial vec(W_i)}$, and note that $\nabla_{\mathbf{\theta}} \loss = vec(\nabla_{\bb} \loss)$ and $\ngd_{\mathbf{w}} \loss = vec(\ngd_{W_1, W_2, \dots W_L} \loss)$.
Rewrite equation \ref{J_eq}:
\begin{equation}
\label{in_terms}
    J vec(\ngd_{W_1, W_2, \dots W_L} \loss)=vec(\nabla_{\bb} \loss).
\end{equation}
If we solve the following equation for $i=1,\dots L$, then the concatenation of vectors $vec(\ngd_{W_i} \loss)$ will solve equation \ref{in_terms} as well.
\begin{equation}
    J_i vec(\ngd_{W_i} \loss) = \frac1L vec(\nabla_{\bb}\loss)
\end{equation}
Let $A_i = W_1 W_2 \dots W_{i-1}$ and $B_i=W_{i+1} W_{i+2} \dots W_L$ and using $\otimes$ notation for the Kronecker product and utilize the property $vec(ABC)=(C^\top \otimes A) vec(X)$ we get
\begin{equation}
    J_i= \frac{\partial vec(\bb)}{\partial vec(W_i)}=\frac{\partial vec(A_i W_i B_i)}{\partial vec(W_i)}=\frac{\partial (B_i^\top \otimes A_i)vec(W_i)}{\partial vec(W_i)}=B_i^\top \otimes A_i,
\end{equation}
thus we need to solve 
\begin{equation}
    (B_i^\top \otimes A_i) vec(\ngd_{W_i} \loss) = \frac1L vec(\nabla_{\bb}\loss)
\end{equation}
for $vec(\ngd_{W_i} \loss)$.
One can do this by exploiting properties of the Kronecker product and using Moore-Penrose pseudo-inverses as follows:
\begin{equation}
    vec(\ngd_{W_i} \loss) =  \frac1L (B_i^\top {}^+ \otimes A_i^+) vec(\nabla_{\bb}\loss)= \frac1L (B_i^\top {}^+  \nabla_{\bb}\loss  A_i^+)
\end{equation}
We note that when $A_i$ and $B_i$ are near full-rank, using the pseudoinverses may not be numerically stable. \cite{Fausett1994} instead proposed a solution based on QR decomposition, and even discussed an approach which extends to the rank deficient case. In practice we found that this was not necessary for ours experiments. As a result, in our implementation we use the formula $\frac1L B_i^\top {}^+  \nabla_{\bb}\loss  A_i^+$ to update the factor matrices with the natural gradient.

\section{Proof of theorems}
\label{sec:appdx_thm_prf}
\subsection{Proof of Theorem 1}
\label{sec:appdx_prf_thm1}
\begin{statement}
    Let's assume, that $N<D$, $X$ is full rank and $A$ is an invertible $D\times D$ matrix. Let $\bb_t=\bb_t(X,\yy)$ be the trajectory of NGF and $\bb'_t = \bb_t(XA^\top,\yy)$ (the trajectory of NGF on data $XA^\top$). Then $X\bb=XA^\top\bb'$ (with the assumption that $\bb$ and $\bb '$ have equivalent initial conditions).
\end{statement}
\begin{proof}
    Let $\s=X\bb$ and $\s'=XA^T\bb'$. The gradient and the Fisher information matrix are the following (the calculation can be found in Lemma \ref{lem1}).
    \begin{equation}
        \begin{split}
            [\nabla_\s \loss(\s)]_i &= -y_i(1-\phi(y_i\s_i)) \\
            [F(\s)]_{i,j} &= \delta_{i,j}\phi(\s_i)(1-\phi(\s_i))
        \end{split}
    \end{equation}
    The exact same can be said about $s'$, so $s$ and $s'$ are the solutions of the same differential equations, so if we use the same initialization $s_t=s'_t$.
\end{proof}

\subsection{Proof of Theorem 2}
\label{sec:appdx_prf_thm2}
\begin{statement}
    Let $\bb_t(X,\yy)$ be the trajectory of NGF and let $A$ be a $D \times D$ invertible transformation. If $N \geq D$, $X$ has full rank and we consider NGF on the transformed data $XA^\top$, then $A^\top \bb_t(XA^\top,\yy)=\bb_t(X,\yy)$ (with the assumption that $\bb$ and $\bb '$ have equivalent initial conditions).
\end{statement}
\begin{proof}
Let $\bb'$ be the trajectory of NGF on the transformed data:
\begin{equation}
    \bb'_t=\bb_t(XA^\top,\yy).
\end{equation}
To run NGF on $\bb'$ we need its Fisher information matrix. Note that the Fisher information matrix of linear models with logistic-loss is
\begin{equation}
    F(\bb) = X^\top \text{diag}[\phi(X\bb)\odot\phi(-X\bb)] X.
\end{equation}
by Appendix \ref{sec:appdx_ng_linear}. Note that in this case the rank of the Fisher information matrix is $D$, so it is invertible. Same is true for $\bb'$. Let's compute the Fisher information matrix of $\bb'$.
\begin{equation}
     F(\bb') = \frac1N \sum_{n=1}^N \ee_{y_n|A\xx_n} [ \nabla_{\bb'} \ell(y_n \bb'^\top A\xx_n) \nabla_{\bb'}^\top \ell(y_n \bb'^\top A\xx_n) ] 
\end{equation}
First, specify $\nabla_{\bb'} \ell(y_n \bb'^\top A\xx_n)$ and use the notation $\vv^\top=\bb'^\top A$.
\begin{equation}
    \nabla_{\bb'} \ell(y_n \bb'^\top A\xx_n) = J^\top \nabla_{\vv^\top} \ell(y_n \vv^\top \xx_n)
\end{equation}
where $J = \frac{\partial v^\top}{\partial \bb'}$.
\begin{equation}
    J_{i,j} = \frac{\partial \vv_i}{\partial \bb'_j} = \frac{\partial \sum_{k=1}^d \bb'_k A_{k,i}}{\partial \bb'_j} = A_{j,i}
\end{equation}
Therefore $J=A^\top \Leftrightarrow J^\top = A$ and
\begin{equation}
    \nabla_{\bb'} \ell(y_n \bb'^\top A\xx_n) = A \nabla_{\vv} \ell(y_n \vv^\top \xx_n).
\end{equation}
We now can continue the computation of the Fisher:
\begin{equation}
\begin{split}
    F(\bb') &= \frac1N \sum_{n=1}^N \ee_{y_n|A\xx_n} [ A \nabla_{\vv^\top} \ell(y_n \vv^\top \xx_n) \nabla_{\vv^\top}^\top \ell(y_n \vv^\top \xx_n) A^\top ] = \\
    &= A ( \frac1N \sum_{n=1}^N \ee_{y_n|\xx_n} [ \nabla_{\vv^\top} \ell(y_n \vv^\top \xx_n) \nabla_{\vv^\top}^\top \ell(y_n \vv^\top \xx_n) ] )  A^\top = A F(\vv) A^\top.
\end{split}
\end{equation}
Note, that the Fisher of $\vv$ must be invertible as well from the previous Equation. Let's see the NGF on $\bb'$:
\begin{equation}
\begin{split}
    \label{ft}
    \dot \bb' &= -F(\bb')^{-1} \nabla_{\bb'} \loss  (\bb'^\top XA^T, y)  = - (A F(\vv) A^\top)^{-1} \sum_{n=1}^N \nabla_{\bb'}\ell (y_n \bb'^\top Ax_n) = \\
    &= - (A^\top)^{-1} F(\vv)^{-1} A^{-1} A \sum_{n=1}^N \nabla_{\vv} \ell (y_n \vv^\top x_n)= - (A^\top)^{-1} F(\vv)^{-1} \nabla_{\vv} \loss(\vv)
\end{split}
\end{equation}
We also have the following (by the Chain Rule):
\begin{equation}
    \label{sd}
    \dot \vv = J\dot \bb' = A^\top \dot \bb'
\end{equation}
Now from Equation (\ref{ft}) and (\ref{sd}) we get:
\begin{equation}
    \dot \vv = F(\vv)^{-1} \nabla_\vv \loss(\vv)
\end{equation}
This is the same differential equation as the one $\bb$ is a solution of. So if they are initialized the same way $\vv_t=\bb_t(X,\yy)$, so $\bb_t = \vv_t = A^\top \bb' $.
\end{proof}

\subsection{Proof of Theorem 4}

\begin{statement}
 Let $\ww \in \real^P, P\geq N$ be the parameters of a classifier with logits $\s =s(X;\ww)\in \real^N$.  If $\ww_t$ follows natural gradient flow on the logistic loss with labels $\yy$ and the Jacobian $J_t= \frac{\partial \s_t}{\partial \ww_t}$ is of full rank, then $\s_t$ grows asymptotically linearly with direction vector $\yy$.
\end{statement}
    \begin{proof}
        First let's note that by the invariance property of NGF the trajectory of $\s$ is defined by the trajectory of $\bb$.
        \begin{equation}
            \label{diff}
            \dot \s = -F^{-1}(\s)\nabla_{\s}\loss (\s)
        \end{equation}
        Let's assume $\s$ is 1-dimensional. In this case $\s = s$, $\xx_1 =\xx$ and $\yy=y$ can be used since we have only one data point. To solve equation (\ref{diff}) we need the gradient and the Fisher information matrix which are the following (the calculation can be found in the Appendix \ref{sec:appdx_ng_logistic})
        \begin{align}
            \nabla_s \loss(s) = -y (1-\phi(ys))
        \end{align}
        The Fisher information matrix:
        \begin{align}
            F(s) &=\phi(s)(1-\phi(s)) 
        \end{align}
        Then Equation (\ref{diff}) can be written as:
        \begin{equation}
            \dot s = \frac{y(1-\phi(ys))}{\phi(s)(1-\phi(s))}
        \end{equation}
        Now we rescale our data points s.t. $\tilde{x}=-x$, so $\tilde{s}=-s$ and $\tilde{y}=-y=1$. Hence we get:
        \begin{equation}
            \frac{\partial \tilde{s}}{\partial t} = \frac{1}{\phi(\tilde{s})}
        \end{equation}
        Which can be solved and the solution is
        \begin{equation}
            \label{solution}
            \log(1+e^{\tilde{s}}) = t+c \quad\Longleftrightarrow\quad    \tilde{s} = \log(e^{t+c}-1)
        \end{equation}
        By equation (\ref{solution}) we get the asymptotic behaviour
        \begin{align}
            \label{fr}
            \lim_{t\to\infty} \frac{\tilde{s}}{t+c} &= \lim_{t\to\infty} \frac{\log(e^{t+c}-1)}{t+c} =\\ 
            \label{sr}
            = \lim_{t\to\infty} \frac{e^{t+c}}{e^{t+c}-1} &= \lim_{t\to\infty} \frac{1}{1-e^{-(t+c)}} =1
        \end{align}
        (From (\ref{fr}) to (\ref{sr}) we use L'Hopital Rule).\\
        Hence we proved Theorem \ref{four}. for $N=1$. Now let's assume, that $N>1$. Now we write down the gradient again:
        \begin{equation}
            \label{gd}
            [\nabla_{\s} \loss(\s)]_i = -y_i (1-\phi(y_i s_i))
        \end{equation}
        And the Fisher information matrix:
       \begin{equation}
            \label{FS}
           [F(\s)]_{i,j} = \delta_{i,j} \phi(s_i) (1-\phi(s_i))
       \end{equation}
       So now if we substitute in Equation (\ref{gd}) and Equation (\ref{FS}) to Equation (\ref{diff}). We can rescale, so $\tilde{y}_i=1 \quad \forall i$ as we did in the previous case. Hence we get the following:
       \begin{align}
           \frac{\partial \tilde{\s}}{\partial t} = - \begin{pmatrix}
           \frac{1}{\phi(\tilde{s}_1)(1-\phi(\tilde{s}_1))} & 0 & \cdots & 0 \\
           0 & \frac{1}{\phi(\tilde{s}_2) (1-\phi(\tilde{s}_2))} & \cdots & 0 \\
           \vdots & \vdots & \ddots & \vdots \\
           0 & 0 & \cdots & \frac{1}{\phi(\tilde{s}_N)(1-\phi(\tilde{s}_N))}
           \end{pmatrix}
           \begin{pmatrix}
                  -(1-\phi(\tilde{s}_1)) \\
                  -(1-\phi(\tilde{s}_2)) \\
                  \vdots \\
                  -(1-\phi(\tilde{s}_N))
           \end{pmatrix} = \begin{pmatrix}
                  \frac{1}{\phi(\tilde{s}_1)} \\
                  \frac{1}{\phi(\tilde{s}_2)} \\
                  \vdots \\
                  \frac{1}{\phi(\tilde{s}_N)}
           \end{pmatrix}
       \end{align}
       Hence we got $N$ independent differential equations which are exactly the same as in the $N=1$ case. So in each dimension $\tilde{\s}$ is asymptotically $t+c$ for some $c$. Hence $\tilde{\s} \approx t\one + \boldsymbol{c}$, where $\boldsymbol{c}\in \real^{D}$ is a constant. So $\s \approx t\yy + \boldsymbol{c}_s$, where $\boldsymbol{c}_s\in\real^{D}$ is a constant.
    \end{proof}
    
\section{Counterexample for the invariance of $\ell_2$ large margin solution}
\label{sec:appdx_counterex}
The counterexample is the following:
\begin{center}
    $A=\begin{pmatrix} 1&2\\ -1&0 \end{pmatrix}$, $y=1$ and $X=\begin{pmatrix} 2&-3 \end{pmatrix}$
\end{center}
Then $\bb^*(X,y)=argmin{\| \bb \| _2}$ subject to $2\bb_1 -3\bb_2 \geq 1$, therefore $\bb^*(X,y)=\begin{pmatrix} 0\\ -\frac13 \end{pmatrix}$. Furthermore $\bb^*(XA^\top,y)=argmin{\| \bb \| _2}$ subject to $-4\bb_1-2\bb_2 \geq 1$, therefore $\bb^*(XA^\top,y)=\begin{pmatrix} -\frac14\\ 0 \end{pmatrix}$, but $A^\top \bb^*(XA^\top,y)=\begin{pmatrix} -\frac14\\ -\frac12 \end{pmatrix} \neq \begin{pmatrix} 0\\ -\frac13 \end{pmatrix}=\bb^*(X,y)$.
\section{Proof of the statement about the parametrization invariance of NGF}
\begin{statement}
Let $\ww$ and $\theta$ be two parameter vectors related by the mapping $\theta = \mathcal{P}(\ww)$ and consider natural gradient flow in $\mathbf{w}$. Assume that (1) the Jacobian $J = \frac{\partial \theta_t}{\partial \mathbf{w}_t}$ and (2) $F(\theta_t)$ are both full rank for all $t$. If $\mathbf{w}_t$ follows natural gradient flow starting from $\mathbf{w}_0$ then $\theta_t = \mathcal{P}(\mathbf{w}_t)$ follows NGF, i.\,e.\ it solves $\dot \theta_t = -F(\theta_t)^+ \nabla_{\theta_t} \loss (X, \theta_t)$.
\end{statement}
\vspace{-11pt}
\begin{proof}\renewcommand{\qedsymbol}{}
We use that $F(\mathbf{w})=J^\top F(\theta) J$ which follows from the definition of $F$:
\begin{equation*}
    F(\mathbf{w})= \ee_X [\nabla_{\mathbf{w}} \loss(X, \mathbf{w}) \nabla_{\mathbf{w}}^\top \loss(X,\mathbf{w})] = \ee_X [ J^\top \nabla_{\theta} \loss(X, \theta) \nabla_{\theta}^\top \loss(X, \theta) J] = J^\top F(\theta) J
\end{equation*}
The invariance statement follows:
\begin{equation*}
\begin{split}
    \dot \theta &= \dot{(\mathcal{P}(\mathbf{w_t}))} = J \dot{\mathbf{w_t}}=- J F(\mathbf{w}_t)^+ \nabla_{\mathbf{w}_t} \loss (X, \mathbf{w}_t) =\\ 
    &= - J J^+ F(\theta_t)^+ (J^\top)^+ J^\top \nabla_{\theta_t} \loss (X, \theta_t)=
    -F(\theta_t)^+ \nabla_{\theta_t} \loss (X, \theta_t).
\end{split}
\end{equation*}
\end{proof}
\section{Proof of the statement about NGD in matrix completion}
\begin{statement}
    Let's apply NGF for the problem of matrix completion. EGF in the direct parametrization ($\bb = \ww$) is equivalent to NGF under any parametrization $\theta$ for which $J=\frac{\partial \ww_t}{\partial \theta_t}$ is full rank.
\end{statement}
\begin{proof}
First let's consider a parametrization $\theta $ s.t. the direct parametrization $\bb = \ww$ ($ = \mathcal{P}(\theta)$) and $J=\frac{\partial \ww}{\partial \theta}$ is full rank.  Then by the invariance property if $\theta_t$ is the solution of the NGF with the arbitrary parametrization, then $\ww_t = \mathcal{P}(\theta_t)$ is the solution of:
\begin{center}
    $\dot \ww = - \nabla_{\ww} \loss (\ww)$
\end{center}
Which agrees with the EGF with direct parametrization.
\end{proof}
\section{Invariance property of OLS}
We show the same transformation invariance property for OLS that we showed in Theorem \ref{one},\ref{two} for NGF. Again, we split the problem into two cases: $N<D$ and $N \geq D$. Note that for the problem $X\bb=y$ the Ordinary least squares solution is $\bb=(X^\top X)^{-1} X^\top y$ if the columns of X is linearly independent.
\begin{statement}
    Let $N<D$, A is an invertible $D \times D$ matrix. If $\bb$ is the solution of the Ordenary least squares problem for the matrix X and $\bb'$ for $XA^\top$, then $X\bb=XA^\top \bb'$.
\end{statement}
\begin{proof}
Immediately follows from the definition of the problems: $X\bb=y$ and $XA^\top\bb'=y$.
\end{proof}
\begin{statement}
    Let $N \geq D$, X has full rank and A is an invertible $D \times D$ matrix. If $\bb$ is the solution of the OLS problem for the matrix X and $\bb'$ for $XA^\top$, then $A^\top \bb' = \bb$.
\end{statement}
\begin{proof}
\begin{equation*}
    A^\top \bb' = A^\top ((XA^\top)^\top XA^\top)^{-1} (XA^\top)^\top y= A^\top A^\top{}^{-1} (X^\top X)^{-1} A^{-1} AX^\top y = \bb
\end{equation*}
\end{proof}

% \section{Experiment for natural gradient in matrix completion}

% \begin{figure}[h!]
%     \includegraphics[width=\columnwidth]{Deep Linear Natural Gradients ICLR2022/}
%     \caption{\label{fig:NGD_grid}}
% \end{figure}

% \section{Implementation details}

% \subsection{Separable classification}

% \subsection{Matrix completion}

\section{Supplementary Figures} 

\begin{figure}[ht]
    \includegraphics{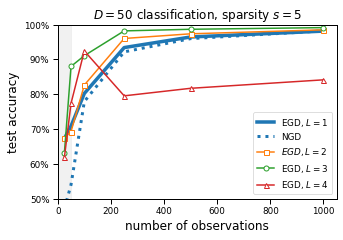}
    \caption{During peer review, reviewers requested a lower dimensional variant of the experiment reported in Figure \ref{fig:1000d_classification}. Instead of 1000 dimensions, in this experiment we used $D=50$, and instead of $S=20$ non-zero components, the real $\bb$ had $S=5$ non-zero entries. The experimental setup and hyperparameters were otherwise not changed from Figure \ref{fig:1000d_classification}. The 5-layer diagonal network performs poorly, which is likely a result of sensitivity to hyperparameters, we expect that with additional fine-tuning of the hyperparameters for this experiment, $L=4$ would do at least as well as the shallow $L=1$ model.}
\end{figure}
\end{document}